\newtheorem{theorem}{Theorem}
\newtheorem{lemma}[theorem]{Lemma}
\begin{document}
%
\title{CAST: Continuous and Differentiable Semi-Structured Sparsity-Aware Training for Large Language Models}
%
%
%

\author{Weiyu Huang, Yuezhou Hu, Jun~Zhu,~\IEEEmembership{Fellow,~IEEE}         and~Jianfei~Chen,~\IEEEmembership{Member,~IEEE,}
\thanks{Weiyu Huang, Yuezhou Hu , Jun Zhu and Jianfei Chen are with the Department of Computer Science and Technology, Institute for AI, BNRist Center, THBI Lab, Tsinghua-Bosch Joint ML Center, Tsinghua University (e-mail: hwy23@mails.tsinghua.edu.cn; huyz21@mails.tsinghua.edu.cn; dcszj@tsinghua.edu.cn; jianfeic@tsinghua.edu.cn;)}
}

%
%

\markboth{Submitted to IEEE TRANSACTIONS ON PATTERN ANALYSIS AND MACHINE INTELLIGENCE on July 2025}%
{Shell \MakeLowercase{\textit{et al.}}: Bare Demo of IEEEtran.cls for IEEE Journals}
%



\maketitle

\begin{abstract}
 Sparsity-aware training is an effective approach for transforming large language models (LLMs) into hardware-friendly sparse patterns, thereby reducing latency and memory consumption during inference. In this paper, we propose Continuous Adaptive Sparse Trainer (CAST), a fully continuous and differentiable sparsity-aware training framework for semi-structured (or “N:M”) sparse models. Unlike previous approaches that optimize sparsity patterns and weights separately, CAST enables seamless joint optimization during training, while progressively transforming the model into the desired sparsity format. Specifically, CAST introduces three key components: 1) \textit{AdamS}, a sparsity-aware optimizer that leverages adaptive $L_1$ decay to promote uniform sparsification across all parameters; 2) \textit{Weight Scaling}, a module designed to mitigate the magnitude reduction caused by decay while preserving desired sparsity patterns; 3) \textit{Knowledge Distillation}, which employs the dense model as a self-teacher to enhance training efficiency. We evaluate CAST under 2:4 sparsity patterns across multiple model families, ranging from 125M to 13B parameters. Our results demonstrate significant improvements over previous state-of-the-art methods in both perplexity and zero-shot accuracy with minimal training resources. Notably, on LLaMA2-7B, our 2:4 sparse model achieves a negligible perplexity increase of 0.09 and a 0.36\% gain in zero-shot accuracy compared to the dense model using only 2\% of the original pretraining tokens. Additionally, we establish an accurate and robust empirical scaling law to predict sparse model performance given adequate training resources. Finally, we demonstrate the practical applicability of our sparse models by evaluating them under quantization and fine-tuning scenarios.
\end{abstract}

\begin{IEEEkeywords}
  Semi-structured sparsity-aware training, knowledge distillation, large language models
\end{IEEEkeywords}

%
\IEEEpeerreviewmaketitle

\section{Introduction}
%
%
%
%




\IEEEPARstart{T}{he} growing scale of Large Language Models (LLMs) \cite{brown2020language, devlin2018bert, grattafiori2024llama} has led to increasing demands for memory and computation during deployment. This has sparked growing interest in model compression techniques designed to address these challenges. Among existing approaches, weight sparsity \cite{10.5555/3546258.3546499} has emerged as a particularly effective solution. By setting a proportion of model parameters to zero and thereby removing them from storage and computation, weight sparsity can significantly reduce memory footprints and accelerate inference in practical deployment scenarios.

Among various weight sparsity patterns, semi-structured (N:M) sparsity—where each group of M parameters retains only N nonzero values—has shown particular promise for balancing hardware efficiency and model accuracy \cite{ mishra2021acceleratingsparsedeepneural, NEURIPS2021_6e8404c3, DBLP:conf/iclr/ZhouMZLZYSL21}. Specifically, the 2:4 sparsity pattern is natively supported by NVIDIA GPUs, thereby enabling faster matrix multiplications and reduced memory latency. This support can yield up to a 2× speedup in both the prefill and decoding stages during inference. Moreover, the fine-grained nature of semi-structured sparsity allows for more precise control over weight pruning, offering greater potential to preserve accuracy compared to coarser-grained alternatives such as structured sparsity. 

Despite the promise of N:M sparsity, its application to large language models still remains largely underexplored. Most existing methods induce N:M sparsity through one-shot pruning \cite{frantar2023sparsegpt, sun2023simple}, which relies on handcrafted importance metrics to identify and remove unimportant weights. While computationally efficient, such methods often severely degrade a model's capacity for challenging zero-shot inference, as manually assigned metrics fail to capture the true importance of individual weights. For example, applying the classic one-shot pruning method Wanda \cite{sun2023simple} to LLaMA2-7B under the 2:4 sparsity pattern reduces its MMLU accuracy from 45.3\% to 27.6\%—approaching the level of random guessing and thus significantly limiting its potential for practical deployment.

To overcome the limitations of heuristic-driven pruning, sparsity-aware training (SAT) enables neural networks to learn and adapt to predefined sparsity patterns during training, achieving desired sparsity without compromising performance. However, prior SAT studies have primarily focused on vision and smaller-scale language models, typically relying on iterative \cite{DBLP:conf/iclr/FrankleC19} or manual pruning and regrowth strategies \cite{mishra2021acceleratingsparsedeepneural, han2015deep, NIPS2015_ae0eb3ee} to discover optimal sparsity patterns. While effective in their context, these methods do not transfer well to LLMs, where training costs are orders of magnitude higher. As a more efficient alternative, recent works such as Wanda \cite{sun2023simple} and MaskLLM \cite{NEURIPS2024_0e9a05f5} apply SAT to pretrained LLMs by either performing simple retraining after one-shot pruning or optimizing only the sparsity patterns (i.e., masks). Despite improved efficiency, these approaches overlook the intertwined relationship between model masks and weights. This lack of joint optimization limits their overall effectiveness and achievable performance. Therefore, in this paper, we aim to develop an \textit{efficient semi-structured sparsity-aware training framework for LLMs that jointly optimizes masks and weights to maintain model performance after sparsification.}

Building on this objective, our preliminary conference work presented at AAAI-25 introduced Adaptive Sparse Trainer (AST) \cite{huang2024pruninglargelanguagemodels}, a method based on the Straight-Through Estimator (STE) \cite{bengio2013estimatingpropagatinggradientsstochastic} that enables joint optimization of masks and weights for pretrained large language models. In particular, STE employs a sparse forward process by masking model weights and treats the non-differentiable masking operations as identity functions during backpropagation. This approximation provides gradients for masked parameters, thereby allowing on-the-fly mask updates based on the parameters' changing magnitudes. While AST achieved promising initial results, we conduced further analysis that revealed several limitations of the STE-based approach. Specifically, its discontinuous nature and reliance on approximate gradients can hinder effective sparsity optimization, often preventing the model from reaching its full potential.

To address the limitations of STE-based approaches and further enhance model precision, in this paper, we introduce Continuous Adaptive Sparse Trainer (CAST), a novel sparsity-aware training framework for semi-structured sparse models that is fully continuous and differentiable. CAST maintains a dense weight pattern for forward propagation throughout training, while inducing gradual sparsification via adaptive $L_1$ decay. This design eliminates the need for STE-based approximations and enables accurate gradient updates. To effectively induce the desired sparsity pattern, CAST introduces AdamS, a sparsity-aware Adam-based optimizer. Given a pretrained dense model, AdamS can progressively induces sparsity by incorporating a designed $L_1$ decay scheduler. Compared with the classic Adam optimizer, AdamS introduces two key adjustments: (1) A proportional gradient-decay mechanism that applies adaptive decay strength across all weights, regardless of their individual sensitivity to regularization; and (2) Decoupled momentum computation, which separates the decay term from the loss gradient during first-order momentum updates, preventing oscillations around zero. Additionally, CAST introduces a trainable weight-scaling module to compensate for the reduction in weight magnitudes caused by $L_1$ decay. This module can be seamlessly folded into the final sparse weights, adding no runtime overhead during deployment. CAST also retains the knowledge distillation technique proposed in AST \cite{huang2024pruninglargelanguagemodels} to further enhance training efficiency. Finally, we conduct extensive experiments across diverse model architectures and varying computational budgets to comprehensively validate CAST’s effectiveness and versatility.

To sum up, building upon our previous conference paper AST \cite{huang2024pruninglargelanguagemodels}, this journal paper proposes the Continuous Adaptive Sparse Trainer (CAST), a novel and efficient training framework designed for semi-structured sparse models, significantly enhancing both performance and efficiency. Our contributions are summarized as follows:

\begin{itemize}
    \item \textbf{Superior Performance:} We propose CAST, a fully continuous and differentiable sparsity-aware training framework, achieving state-of-the-art performance under similar computational budgets. Notably, on LLaMA2-7B, our 2:4 sparse model incurs less than a 0.1 increase in perplexity while improving zero-shot accuracy by 0.36\% compared to the dense model, using only 2\% of the original pretraining tokens.

    \item \textbf{AdamS Optimizer:} CAST introduces AdamS, a novel Adam-based optimizer specifically designed for sparsity-aware training. AdamS induces uniform and gradual sparsity throughout training, ensuring stable and effective sparsity-aware training. Additionally, this approach maintains dense forward propagation, avoiding biased gradient updates and thereby improving overall performance.
    
    \item \textbf{Weight Scaling Module:} CAST counteracts the weight magnitude shrinkage due to regularization by introducing a learnable weight scaling module. Notably, this module can be seamlessly integrated into the final model weights, preserving the intended sparsity pattern without additional inference overhead.

    \item \textbf{Knowledge Distillation:} CAST demonstrates that incorporating knowledge distillation enables a more efficient sparsity-aware training process and achieves superior performance under a similar computational budget.
    
    \item \textbf{Comprehensive Validation and Practical Applicability:}  We validate CAST across multiple model families, including GPT-2, OPT, LLaMA-2 and LLaMA-3, with parameter sizes ranging from 125M to 13B. Our comprehensive experiments, including scaling law, fine-tuning, quantization, and practical speedup evaluations, demonstrate CAST’s superior performance and practical potential for real-world applications in deploying semi-structured sparse models.
\end{itemize}

\section{Related Work}
Weight sparsity is a widely used model compression technique that reduces memory usage and computational cost, thereby accelerating inference. Existing methods generally fall into two categories: training-free (one-shot) pruning and training-based approaches. In the following sections, we review both in detail.
\subsection{One-shot Pruning}
 One-shot pruning methods trace back to early works such as Optimal Brain Damage (OBD) \cite{lecun1989optimal} and Optimal Brain Surgeon (OBS) \cite{hassibi1993optimal}. Modern training-free pruning techniques aim to avoid computational overhead by applying pruning without additional training. These methods are classified into pruning types based on their sparsity patterns: unstructured, structured, and semi-structured pruning. Unstructured pruning removes individual weights \cite{han2015deep, paul2022unmasking}, often maintaining strong performance even at high sparsity levels. However, due to the irregularity of the sparsity pattern, such models are difficult to accelerate efficiently on standard hardware. In contrast, structured pruning \cite{liu2017learning, molchanov2019importance, nova2023gradient, shen2022structural} removes entire units such as neurons, filters, or attention heads, making models more hardware-friendly but often at the cost of significant performance degradation. Semi-structured pruning, such as N:M sparsity \cite{hubara2021accelerated}, offers a middle ground by enforcing regular sparsity patterns that strike a balance between acceleration and performance. Recently, numerous studies \cite{frantar2023sparsegpt, sun2023simple, zhang2024plug, zhang2023dynamic} have made progress in pruning large language models (LLMs) with billions of parameters using training-free approaches. However, these pruned models often fall short of matching the performance of their dense counterparts on complex language understanding and reasoning tasks.

\subsection{Training Based Pruning Methods}

Another line of research \cite{NIPS2015_ae0eb3ee,  renda2020comparing, NEURIPS2020_d1ff1ec8, zhou2023three} focuses on training pruned models to recover performance. While substantial advances have been made on vision and smaller language models \cite{kurtic2022gmp, zhu2017prune}, many of these methods require repetitive training cycles \cite{DBLP:conf/iclr/FrankleC19} or introduce additional parameters during pruning \cite{shi2023upop}, limiting their scalability to LLMs. Some methods
pretrain sparse models using gradient estimators such as SR-STE \cite{zhou2021learning}, while later works improve efficiency by pruning a pretrained dense model and retraining either the weights or the masks. Sheared LLaMA \cite{xia2023sheared} employs a two-stage structured pruning process that results in models outperforming others of comparable size, showcasing the potential of retraining pruned networks. In the context of semi-structured sparsity, recent methods such as Wanda \cite{sun2023simple} first prune the model and subsequently updates only the remaining weights, whereas MaskLLM \cite{NEURIPS2024_0e9a05f5} utilizes a reparameterization strategy to learn masks via gradient updates without explicitly optimizing weights. However, these approaches generally fail to jointly optimize both masks and weights, and more importantly, they perform pruning in an abrupt, one-shot manner, limiting model adaptability. In contrast, our method jointly optimizes masks and weights and adopts a continuous transformation process, gradually converting dense models into sparse ones during training, facilitating smoother adaptation and better performance.

\section{Preliminary}

In this section, we formalize the objective of sparsity-aware training. Given a pretrained language model with $ L $ linear layers, we denote its set of parameters as $ \mathbf{\Theta} = \{W^0, W^1, \dots, W^{L-1}\} $, where each $ W^k \in \mathbb{R}^{R_k \times C_k} $ is the weight matrix of the $ k $-th linear layer, for $ k \in \{0, 1, \dots, L-1\} $.

Introducing sparsity amounts to applying an element-wise product between the original weight matrices and a set of binary masks of the same shape. The resulting sparse weight matrices $ \hat{W}^k \in \hat{\mathbf{\Theta}} = \{\hat{W}^0, \hat{W}^1, \dots, \hat{W}^{L-1}\} $ are given by:
\begin{equation} \label{eq:1}
\hat{W}^k = W^k \odot M^k,
\end{equation}
where $ M^k \in \mathbf{M} = \{M^0, M^1, \dots, M^{L-1}\} $ is the corresponding binary mask matrix for $W^k$, indicating whether each parameter is retained (1) or pruned (0). For notational completeness, we also use $\theta \in \mathbf{\Theta}$ to refer to individual scalar weights within the parameter set, where $ \theta $ denotes an entry in one of the matrices $ W^k $. The corresponding binary indicator for each scalar weight is denoted as $ m \in \mathbf{M} $.

\begin{figure}[t]
    \centering
    \includegraphics[width=3.5in]{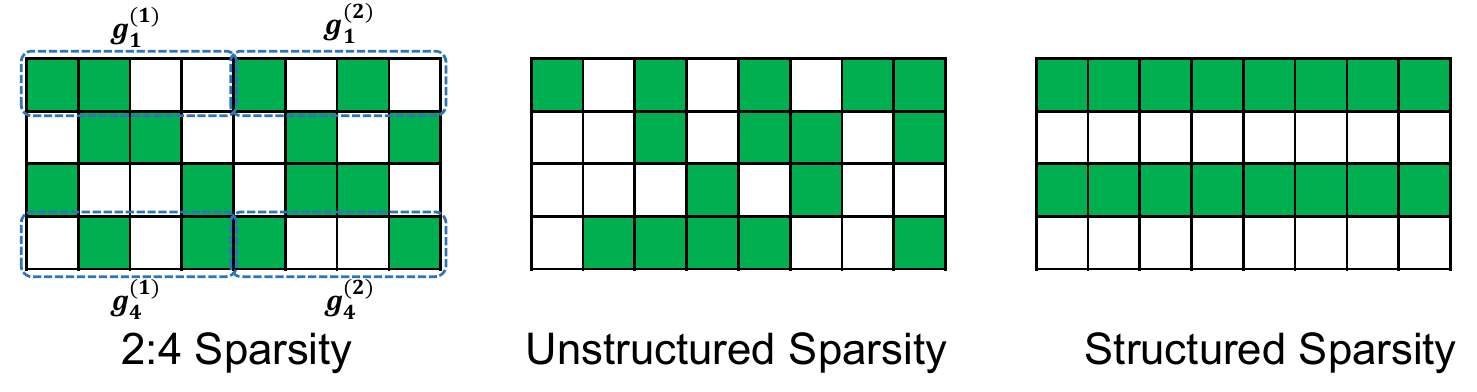}
    \caption{Illustration of different sparsity patterns under 50\% sparsity ratio. Unstructured sparsity constrains only the total number of non-zero elements; structured sparsity typically removes entire rows or columns; whereas 2:4 sparsity enforces that each 4-element group $g_r^{(i)}$ retains exactly 2 non-zero weights.} 
    \label{fig:1}
\end{figure}

\begin{figure*}[t]
    \centering
    \includegraphics[width=18.0cm,height=5.8cm]{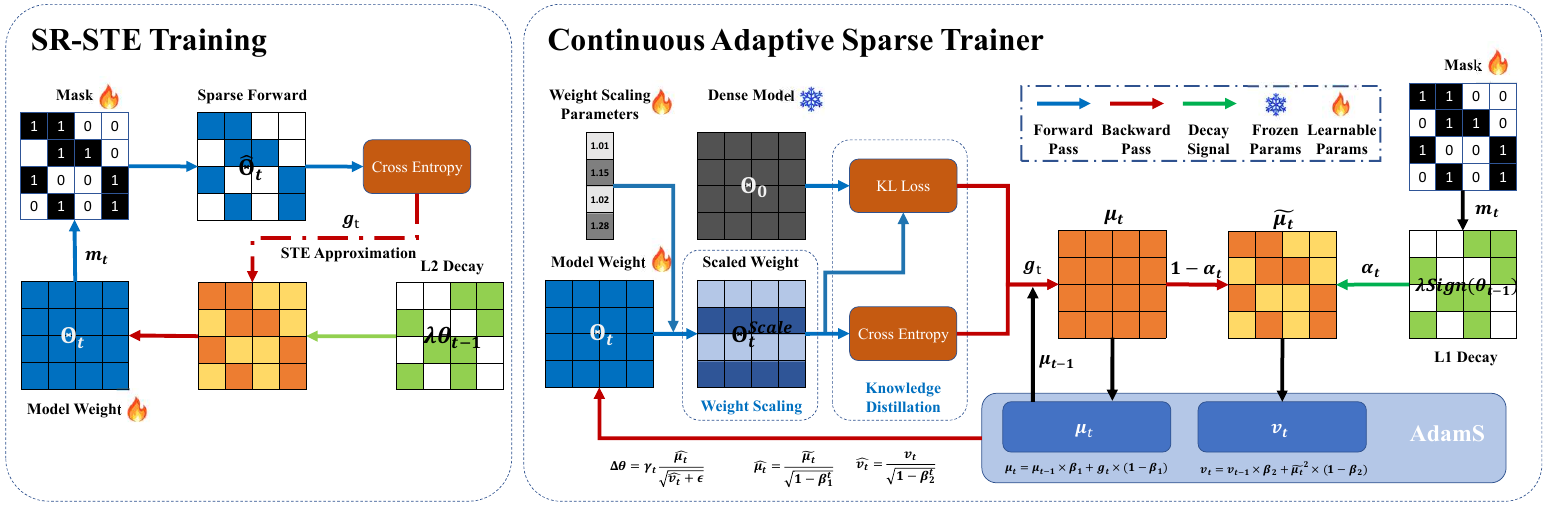}
    \caption{\textbf{(Left)} In the SR-STE training baseline, the forward process is sparse, which can lead to suboptimal performance due to gradient approximation and discontinuities. \textbf{(Right)} In contrast, Continuous Adaptive Sparse Trainer maintains a dense forward process while using AdamS to gradually induce sparsity. CAST further incorporates knowledge distillation and a learnable weight scaling module to enhance training efficiency and improve model performance.} 
    \label{fig:2}
\end{figure*}

The sparsity-aware training objective for N:M sparsity is to optimize $\mathbf{\hat{\Theta}}$ under the sparsity constraint $S(N,M,\mathbf{\Theta})$:
\begin{align}
\label{eq:2}
\min_{\mathbf{\hat{\Theta}} \in S(N,M,\mathbf{\Theta})} \quad & \mathbb{E}_{x \sim p(x)} \, \mathcal{L}(\mathbf{\hat{\Theta}}, x), 
\end{align}
where $x$ is the input, $p(x)$ is the data distribution and $\mathcal{L}$ is the language modeling loss. For $\hat{W}^k \in \mathbf {\hat{\Theta}}$ that satisfies the constraint $S(N,M,\mathbf{\Theta})$, each row $r \in \{0, 1, \dots, R_k - 1\}$ of $\hat{W}^k$ is partitioned into $C_k / M$ contiguous groups of size $M$. The column indices of the $i^{th}$ group of row $r$ are denoted as:
\begin{equation} \label{eq:3}
g_{r}^{(i)} = \{iM, iM+1, \dots, (i+1)M - 1\}, 
\end{equation}
where $i = 0, 1, \dots, C_k/M - 1$. The constraint requires that each group should retain exactly $ N $ non-zero elements. The corresponding mask $ M^k $ satisfies:
\begin{equation} \label{eq:4}
\sum_{j \in g_{r}^{(i)}} M^k[r, j] = N.
\end{equation}

In this paper, we train our models to follow the 2:4 sparsity pattern, where each group of four weights contains exactly two non-zero elements to align with hardware support. We illustrate the 2:4 sparsity pattern alongside other sparsity formats within a 4×8 matrix in Figure~\ref{fig:1}.

\section{Method}

In this section, we present Continuous Adaptive Sparse Trainer (CAST), a sparsity-aware training framework designed to effectively transform dense pretrained models into sparse counterparts while preserving their performance. CAST consists of three key components: First, we introduce AdamS, a sparsity-aware optimizer that gradually induces sparsity through selective regularization. Second, CAST proposes a learnable, sparsity-preserving weight scaling module that offsets the magnitude reduction caused by regularization. Third, CAST applies the knowledge distillation strategy from AST to enhance training stability and accelerate convergence. The overall training pipeline is illustrated in Figure~\ref{fig:2}.

\subsection{AdamS: A Sparsity Inducing Optimizer}

Given a pretrained dense model, we aim to solve the sparsity-constrained optimization problem defined in Equation~\eqref{eq:2}. To align the model weights with the desired sparsity pattern, a straightforward approach is to incorporate an additional $L_1$ regularization term into the language modeling loss, resulting in a Lasso-style \cite{51791361-8fe2-38d5-959f-ae8d048b490d} objective of the form:
\begin{equation} \label{eq:5}
\min_{\mathbf{\Theta}} \mathcal{L}(\mathbf{\Theta}, x) + \lambda \left\lVert \mathbf{\Theta} \right\rVert_1,
\end{equation}
where $\lambda$ is the regularization strength. However, this formulation faces critical limitations in the context of semi-structured sparsity. First, it provides no control over the resulting sparsity pattern or ratio, making it unsuitable for patterned formats such as 2:4 sparsity. Second, the direct competition between the primary loss and the regularization term can lead to degraded model performance. Finally, how to integrate standard $L_1$ regularization with momentum-based optimizers like Adam to support effective sparsity induction remains unexplored.

To this end, we introduce AdamS—a sparsity-aware optimizer that effectively transforms a pretrained dense model into a fine-grained 2:4 sparse one. Instead of explicitly computing the $L_1$ regularization term, AdamS applies it as a weight decay term integrated directly into the optimizer to improve efficiency. We present its pseudo-code in Algorithm~\ref{alg:AdamS}. Compared to the standard Adam optimizer, AdamS further introduces three key modifications to induce semi-structured sparsity.

First, AdamS employs selective decay by maintaining a set of binary masks at each iteration $t$, denoted as $\mathbf{M}_t = \{M_t^0, M_t^1, \dots, M_t^{L-1}\}$, where mask $M_t^k$ corresponds to the linear weight matrix $W_t^k \in \mathbf{\Theta}_t$ and conforms to the target 2:4 sparsity pattern defined in Equation \eqref{eq:4}. These masks are updated every $T_1 = 10$ iterations during training based on weight magnitude. Specifically, for weight matrix $W^k_t$, the mask values for each group $g_{r}^{(i)}$ are determined as follows:

\begin{align}
    \label{eq:6}
    M^k_t[r, j] = \begin{cases}
        1, & \text{if} \quad |W^k_t[r, j]| \geq \xi \\
        0,  & \text{if} \quad |W^k_t[r, j]| < \xi
    \end{cases} \quad j \in g_{r}^{(i)},
\end{align}
where $\xi$ is the second largest absolute value within the 4 elements of $g_{r}^{(i)}$. The resulting masks are then used to determine which weights are subjected to regularization at the current iteration. Accordingly, for each scalar parameter $\theta_t \in \mathbf{\Theta}_t$, $L_1$ decay is applied only to the masked parameters (i.e., those with $m_t = 0$). By introducing a decay term of $\lambda\text{sign}(\theta_t)$ alongside the gradient signal, $L_1$ decay gradually drives masked weights toward zero and thus aligning the dense model with the target 2:4 sparsity pattern encoded in the masks. The final sparse model is obtained by applying an element-wise product between the trained weights and the binary masks at the end of training. Notably, while AdamS performs dense forward computation throughout training, the masked weights are reduced to negligible magnitudes by the end, ensuring that the final hard pruning step introduces no performance degradation. 

\begin{algorithm}
\caption{AdamS, our proposed algorithm for sparsity-inducing optimization.}
\label{alg:AdamS}
\textbf{Input:}  Total training iterations $T$; mask update frequency $T_1$; regularization strength $\lambda$; exponential decay rates $\beta_1$, $\beta_2$; Given current iteration $t$, we denote the learning rate as $\gamma_t$; model weights as $\mathbf{\Theta}_t$; for each parameter $\theta_t \in \mathbf{\Theta}_t$, let $m_t \in \mathbf{M}_t$ denote its binary mask, $\mu_t$ its first-order moment, and $v_t$ its second-order moment.
\begin{algorithmic}[1] 
\FOR{each $\theta_{0} \in \mathbf{\Theta}_0$}
\STATE Initialize: $\mu_0$ = 0, $v_0=0$
\ENDFOR
\STATE $\mathbf{\Theta}_0$ as pretrained model parameters.
\FOR{$t=0,1,\dots,T-1$}
\IF{t$\mod$$T_1$ = 0}
\STATE \leavevmode\colorbox{yellow!40}{\parbox{\dimexpr\linewidth-2\fboxsep}{ Update mask {$\mathbf{M}_t$} \strut ---\textbf{Equation \eqref{eq:6}}}}

\ENDIF

\FOR{ each parameter $\theta_{t} \in \mathbf{\Theta}_t$}
\STATE $g_t \leftarrow \nabla_{\theta} \mathcal{L}(\theta_{t-1})$
\STATE $\mu_t \leftarrow \beta_1  \mu_{t-1} + (1-\beta_1)  g_t$
\STATE $\alpha_t = t/T$
\IF{$m_{t}=0$ }
\STATE \leavevmode\colorbox{cyan!20}{\parbox{\dimexpr\linewidth-2\fboxsep}{ $\tilde\mu_t \leftarrow (1 - \alpha_t) \mu_t + \alpha_t  \lambda  \operatorname{Sign}(\theta_{t-1})$ \strut -\textbf{Equation \eqref{eq:8}}}}
\ELSE
\STATE $\tilde \mu_t \leftarrow \mu_t$
\ENDIF
\STATE \leavevmode\colorbox{green!20}{\parbox{\dimexpr\linewidth-2\fboxsep}{ $v_t  \leftarrow \beta_2  v_{t-1} + (1-\beta_2) \tilde\mu_t^2$ \strut ---\textbf{Equation \eqref{eq:7}}}}

\STATE $\hat{\mu_t} \leftarrow \tilde\mu_t/(1-\beta_1^t) $
\STATE $\hat{v_t} \leftarrow v_t/(1-\beta_2^t) $
\STATE $\theta_{t} \leftarrow \theta_{t-1} - \gamma_t \cdot \hat{\mu_t} / (\sqrt{\hat{v_t}}+\epsilon)$
\ENDFOR
\ENDFOR

\FOR{each parameter $\theta_T \in \mathbf{\Theta}_T$}
\STATE Prune model by element-wise product  $\hat{\theta_T} = \theta_T \cdot m_T$
\ENDFOR
\RETURN $\mathbf{\hat{\Theta}_T}$
\end{algorithmic}
\end{algorithm}

Secondly, to ensure that masked weights are effectively decayed to zero, AdamS adopts a proportional decay strategy rather than directly adding the decay signal to the loss gradient. The decayed gradient signal is calculated as:
\begin{equation}
\label{eq:7}
\tilde{\mathcal{G}}_t = (1 - \alpha_t) \mathcal{G}_t + \alpha_t \lambda \operatorname{Sign}(\theta_{t-1}),
\end{equation}
where $T$ denotes the total number of training steps, and $\alpha_t = \frac{t}{T} \in [0, 1]$ is a time-dependent scaling factor. Here, $\mathcal{G}_t$ represents the gradient signal, which may correspond to the raw gradient in SGD or the first-order momentum in Adam. This formulation uses the coefficient $\alpha_t$ to dynamically adjust the relative contributions between gradient and decay components, ensuring that masked weights receive sufficient decay regardless of gradient influence, thereby enabling stable convergence to zero by the end of training.

Finally, we modify the update rule further to ensure compatibility between $L_1$ decay and momentum-based optimizers. Specifically, we apply the decay to the first-order momentum and use the resulting sum to compute the second-order moment. At iteration $t$, AdamS updates the parameter $\theta$ as:
\begin{align}
\label{eq:8}
     \mu_t  &= \beta_1  \mu_{t-1} + (1-\beta_1) g_{t}, \notag \\
    \tilde \mu_t &= (1 - \alpha_t) \mu_t + \alpha_t  \lambda  \operatorname{Sign} (\theta_{t-1}), \quad \hat\mu_t = \tilde\mu_t/(1-\beta_1^t), \notag \\
    v_{t}  &= \beta_2 v_{t-1} + (1-\beta_2) \tilde\mu_{t}^2, \quad \hat v_t = v_t/(1-\beta_2^t),  \notag \\
    \theta_{t} &= \theta_{t-1} -\gamma_t \frac{\hat\mu_t}{\sqrt{\hat v_t}+ \epsilon},
\end{align}
where $g_t$ denotes the gradient of $\theta$ with respect to the training loss $\mathcal{L}$; $\mu_t$ and $v_t$ represent the first-order and second-order moments, respectively; $\gamma_t$ is the learning rate; and $\beta_1$, $\beta_2$ are beta parameters. This design stems from the discontinuous nature of $L_1$ decay at zero. AdamS explicitly decouples the decay from the first-order momentum to ensure that the decay signal remains accurate and uninfluenced by historical information. This helps maintain a reliable decay direction, thereby effectively driving the masked weights toward zero.

With these modifications, AdamS supports agile joint optimization of both masks and weights throughout training, while ensuring that masked weights decay toward zero and conform to the desired sparsity pattern. We elaborate on the rationale behind these design choices in the following sections.

\subsubsection{Dynamic Mask Optimization}

\begin{figure}[t]
\centering
\includegraphics[width=3.2in]{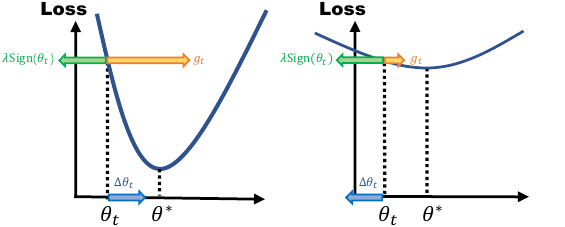}
\centering
\caption{(Left): Parameters with greater impact on model performance are preserved despite decay.
(Right): Parameters with minimal contribution to the loss are gradually driven to zero, allowing AdamS to adaptively identify and retain important weights during training.
}
\label{fig:3-5}
\end{figure}

AdamS uses weight magnitude as a proxy to gradually identify an optimal sparsity pattern while transforming a dense model into a 2:4 sparse one. As shown in Figure \ref{fig:3-5}, since important parameters typically receive consistently strong gradient signals due to their greater contribution to the loss, they tend to grow in magnitude despite regularization and are therefore retained. In contrast, less important parameters that contribute less to the loss are influenced by weak or noisy gradients and tend to decay toward zero. When combined with frequent and agile updates to the binary mask $\mathbf{M}_t$, this mechanism allows the sparsity pattern to adapt in real-time to the evolving importance of weights during training, effectively preserving salient weights while eliminating redundant ones. Importantly, by retaining dense forward computation and integrating sparsity induction into optimization process , AdamS avoids the performance degradation associated with explicit pruning in prior methods.

\subsubsection{Proportional Regularization}

\begin{figure}[t]
\centering
\includegraphics[width=3.6in]{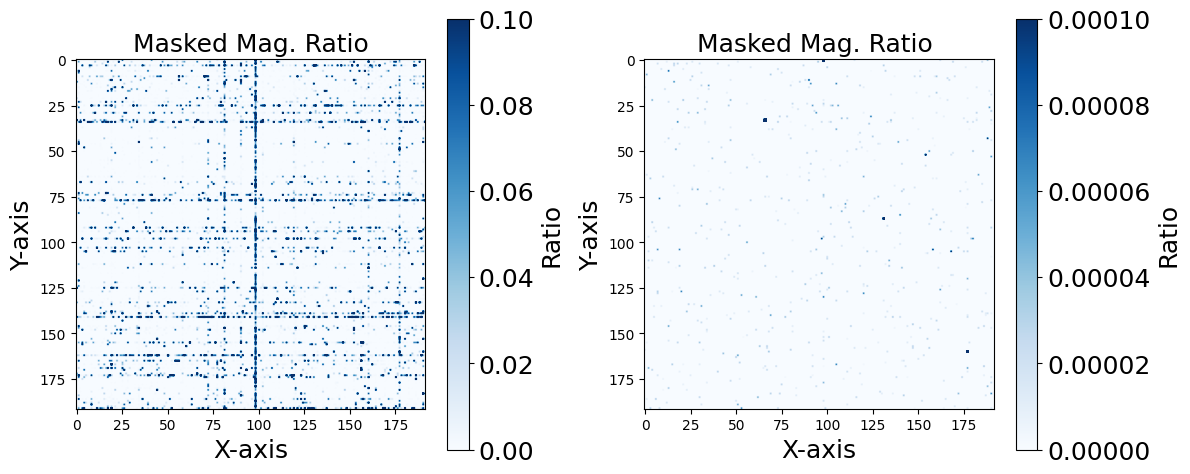}
\centering
\caption{(Left): Distribution of magnitude ratios of the two masked weights within 4-element groups under 2:4 sparsity in the first MLP layer of block 11 in the GPT-2 model with additive decay. (Right): Same distribution under proportional decay. Proportional decay uniformly drives masked weights to zero, whereas additive decay often leaves many masked weights insufficiently reduced, risking degradation during final pruning. Max pooling is applied to enhance readability.
}
\label{fig:3}
\end{figure}

When adding a uniform decay term directly to the gradient update, as in traditional approaches, we observe varying sensitivity across different parameters. Specifically, under the same regularization strength, some masked weights decay to near zero early in training (e.g., within the first 5\% of total steps), while others retain magnitudes comparable to unmasked parameters even at the end of training, as illustrated in the left graph of Figure \ref{fig:3}. This variability makes it challenging to select a single hyperparameter that performs consistently across all weights, often leaving a substantial portion insufficiently decayed. As a result, converting the final model to a strict 2:4 format post-training often results in significant performance degradation. To address this, we adopt a proportional regularization strategy as shown in Equation \eqref{eq:7}. Unlike directly adding decay to gradients, this proportional strategy provides more precise control over the relative influence of decay versus gradient signals, thereby allowing all weights to decay uniformly toward zero despite varying gradient signal strength. The method is also easy to tune in practice, as the hyperparameter $\lambda$ is set to the same order of magnitude as the gradient $g_t$. As shown in Figure~\ref{fig:3}, our approach uniformly drives all masked weights toward zero, regardless of their sensitivity to regularization, ensuring final pruning does not degrade performance.

\subsubsection{Decoupling $L_1$ Decay with First-Order Momentum} 
To better understand the challenges associated with applying selective \texorpdfstring{$L_1$}{l1} decay, we begin by examining how traditional Adam and AdamW optimizers incorporate decay, followed by a discussion of the limitations each approach presents in our setting. Specifically, the AdamW approach applies decay directly to the parameters after the standard gradient-based update. Its update rule is given by:
\begin{align}
\label{eq:9}
    \mu_{t}  &= \beta_1 \mu_{t-1} + (1-\beta_1) g_{t}, \quad \hat\mu_t = \mu_t/(1-\beta_1^t),\notag \\
    v_{t}  &= \beta_2 v_{t-1} + (1-\beta_2) g_{t}^2, \quad \hat v_t = v_t/(1-\beta_2^t) ,\notag \\ 
     \theta_{t} &= \theta_{t-1} -\gamma_t (\frac{\hat\mu_t}{\sqrt{\hat v_t}+ \epsilon} + \lambda \operatorname{Sign}(\theta_{t-1})).
\end{align}
 The Adam approach, on the other hand, integrates the decay term directly into the gradient, allowing it to be modulated by the second-order moment estimate $\sqrt{\hat{v}_t}$ before being applied to the weights. The corresponding update rule is given by:
\begin{align}
\label{eq:10}
    \tilde g_{t} & = g_{t}+\lambda \operatorname{Sign}(\theta_{t-1}), \notag \\ 
    \mu_{t}  &= \beta_1 \mu_{t-1} + (1-\beta_1) \tilde g_{t}, \quad \hat\mu_t = \mu_t/(1-\beta_1^t),\notag \\
    v_{t}  &= \beta_2 v_{t-1} + (1-\beta_2) {\tilde g_{t}}^2, \quad \hat v_t = v_t/(1-\beta_2^t), \notag \\
     \theta_t &= \theta_{t-1} -\gamma_t (\frac{\hat\mu_t}{\sqrt{\hat v_t}+ \epsilon}) ,
\end{align}

In the AdamW setting, challenges arise in identifying optimal masks due to its use of a uniform decay rate across all parameters. Unlike Adam, which scales the decay term by the second-order estimate $\sqrt{\hat{v}_t}$—allowing weights with larger gradients to receive smaller decay penalties and thus better preserve important weights—AdamW applies the same decay strength to all weights, regardless of gradient magnitude. This uniformity limits the optimizer’s ability to make decisive distinctions between competing weights, thereby reducing training stability and hindering mask learnability. Conversely, the Adam approach encounters difficulty in driving parameters precisely to zero, as it integrates the decay term into the first-order moment. Due to the discontinuous nature of $L_1$ decay at zero, this integration causes delayed adaptation. When a parameter crosses zero, the decay direction should ideally reverse immediately. However, accumulated momentum causes a lag, leading to oscillations near zero. This behavior weakens sparsity induction and impairs overall model convergence.

AdamS addresses these challenges by applying the $L_1$ decay term to the first-order moving average and using the resulting sum for second-order moment estimation, as shown in Equation \eqref{eq:8}. This design avoids inaccuracies caused by integrating $L_1$ decay into the first-order moment, while enabling the decay term to be adaptively scaled by $\sqrt{\hat{v}_t}$ for improved mask learnability. The second-order moment, which governs the adaptive update magnitude, is then computed from the combined signal of the moving average and decay term, ensuring stable and effective optimization.

$\textbf{Remark.}$ AdamS updates the mask every 10 iterations, which introduces negligible computational overhead. In terms of memory, it only adds a binary mask, which increase the optimizer state by just 1/32. While this work primarily focuses on semi-structured patterns, our method is inherently agnostic to sparsity formats and can be readily extended to others.
\subsection{Weight Scaling}

In the previous section, we applied decay to model parameters to induce sparsity. This inevitably reduces overall weight magnitudes, which can diminish the model's expressive capacity and adversely affect performance. To address this issue, we introduce a trainable scaling module that adjusts the magnitude of each group in weight matrix to compensate for the effects of regularization. Prior work such as ST-3 \cite{10030853} proposed manually computing a scaling factor based on the ratio of previous and current magnitudes. However, due to the dense forward process adopted in our method, this strategy is not directly applicable. Additionally, manual tuning can introduce fluctuations in the scaling factor, leading to instability during training. Therefore, we instead allow the scaling factors to be learned jointly during training. Specifically, for a weight matrix $W^k \in \mathbb{R}^{R_k \times C_k}$, we apply row-wise scaling by defining a scaling factor vector $A^k \in \mathbb{R}^{R_k}$. The scaled weights used during forward propagation become:

\begin{align}
W^{\text{scale}}_k 
= \begin{pmatrix}
a_1 \cdot \textbf{w}_1 \\
a_2 \cdot \textbf{w}_2 \\
\vdots \\
a_{R_k} \cdot \textbf{w}_{R_k}
\end{pmatrix}
= \text{Diag}(A^k)\,W^k,
\label{eq:11}
\end{align}
where $\text{Diag}(A^k)$ is a diagonal matrix constructed from the scaling vector $A^k$, and $\textbf{w}_i \in \mathbb{R}^{1 \times C_k}$ denotes the $i^{\text{th}}$ row of $W^k$. Furthermore, to achieve finer-grained scaling, we can apply different scaling factors to parameter groups within each row. Specifically, suppose we partition each row into $n$ groups, assuming $n \mid C_k$. In this case, the scaling matrix becomes $A^k \in \mathbb{R}^{R_k \times n}$. To apply this group-wise scaling, we reshape $W^k$ and $A^k$. The scaled weights used during forward propagation become:

\begin{align}
\mathrm{reshape}\left( W^k \in \mathbb{R}^{R_k  \times C_k} \right) \rightarrow \tilde{W^k} \in \mathbb{R}^{(R_k \cdot n) \times (C_k/n)}, \notag\\
\mathrm{reshape}\left( A^k \in \mathbb{R}^{R_k \times n} \right) \rightarrow \tilde{A^k} \in \mathbb{R}^{R_k \cdot n}, \notag \\
\tilde{W}_{\text{scale}}^k
= \begin{pmatrix}
\tilde{a}_1 \cdot \tilde{\textbf{w}}_1 \\
\tilde{a}_2 \cdot \tilde{\textbf{w}}_2 \\
\vdots \\
\tilde{a}_{R_k \cdot n} \cdot \tilde{\textbf{w}}_{R_k \cdot n}
\end{pmatrix}
= \text{Diag}(\tilde{A^k})\,\tilde{W^k}.
\label{eq:12}
\end{align}
Here, $ \tilde{\textbf{w}}_i \in \mathbb{R}^{1 \times (C_k / n)} $ represents the $ i^{\text{th}} $ reshaped row segment of $ \tilde{W}^k $. Finally, we reshape $ \tilde{W}_{\text{scale}}^k $ back to $ {W}_{\text{scale}}^k \in \mathbb{R}^{R_k \times C_k} $ for later operations. During our experiments, we initialize the scaling parameters to 1, allowing them to gradually increase and effectively compensate for the decreasing magnitudes of masked weights caused by regularization. The hyperparameter $n$ should be chosen with care: larger values may introduce redundancy, while smaller values may lack sufficient granularity. Importantly, since the scaling is applied via element-wise multiplication, the sparsity pattern is preserved, and the scaling factors can be seamlessly folded into the original weights. This design avoids the need for storing or computing additional parameters, incurring no inference-time overhead. 

\subsection{Improving Training Efficiency through Knowledge Distillation}

Given that most state-of-the-art language models do not release their original pretraining datasets, reproducing their performance under sparsity constraints presents a significant challenge. A practical solution is to use the original dense model as a teacher for knowledge distillation. This enables performance retention without access to the original dataset. Specifically, we apply KL divergence loss \cite{kullback1951information}. Unlike the standard language modeling loss, KL-divergence measures the difference between two probability distributions, providing richer supervisory signals that help reduce overfitting during the early stages of training. As a result, we adopt the following loss function:
 \begin{align}
    \mathcal{L}_{kl} = D_{\text{KL}}&(P_{\text{t}} \parallel  P_{\text{s}}) 
    = \sum_{x} P_{\text{t}}(x) \log \frac{P_{\text{t}}(x)}{P_{\text{s}}(x)}, \nonumber \\
     \mathcal{L} &= \eta \mathcal{L}_{kl} + (1-\eta)\mathcal{L}_{ce}, 
     \label{eq:13}
 \end{align}
where $\mathcal{L}_{ce}$ is the cross-entropy loss and $P_{t}$ and $P_{s}$ are the probability distribution of the teacher and student models, respectively. Furthermore, although knowledge distillation introduces additional overhead, we find it to be cost-efficient. Under the same training budget, it yields significantly better results, especially for smaller models where performance can degrade substantially without distillation. Additionally, we explored various distillation strategies that incorporate intermediate-layer information. However, the results showed that imposing constraints on intermediate outputs adversely affects generalization and leads to suboptimal performance. Detailed results are provided in Appendix H.

\section{Comparison of Mask Optimization Dynamics in Prior SAT Methods}

In this section, we compare the mask optimization process of CAST with existing sparsity-aware training (SAT) approaches for large language models. While many prior methods have shown success in inducing sparsity, they often fall short in supporting joint optimization of the sparsity pattern and model weights. For example, methods such as Wanda \cite{sun2023simple} and SNIP \cite{lee2018snip} treat the sparsity mask as fixed throughout training, entirely foregoing mask optimization. Conversely, MaskLLM \cite{NEURIPS2024_0e9a05f5} fixes the model weights and only learns the mask. Some approaches, such as RIGL \cite{evci2020rigging} and DNS \cite{guo2016dynamic}, enable limited joint adaptation by iteratively pruning and regrowing weights based on heuristic importance scores during training; however, these methods often rely on coarse approximations that fail to accurately capture true weight significance. Among existing approaches, SR-STE \cite{zhou2021learning} achieves stronger performance by enabling more dynamic co-optimization of weights and masks. It leverages the straight-through estimator (STE) \cite{bengio2013estimatingpropagatinggradientsstochastic} to update masked weights and adjust the sparsity pattern on-the-fly. Specifically, for a weight $\theta_t$ at iteration $t$, SR-STE applies an element-wise product between the mask and the weight during the forward pass, i.e., $\hat{\theta}_t = m_t \cdot \theta_t$. Let $g(\cdot) = \nabla_\theta \mathcal{L}(\cdot)$ denote the gradient with respect to $\theta$. To enable updates to masked parameters during the backward pass, SR-STE bypasses the masking operation by using the gradient of $\hat{\theta}_t$ as a proxy for the gradient of $\theta_t$:

\begin{equation} \label{eq:14}
 g(\theta_t)  \approx g(\hat{\theta}_t).
\end{equation}
Moreover, SR-STE applies $L_2$ decay to masked weights to suppress mask fluctuations. Since SR-STE is limited to the SGD optimizer, the corresponding update rule is:
\begin{equation} \label{eq:15}
    \theta_{t} = 
        \theta_{t-1} - \gamma_t (g(\hat{\theta}_{t-1}) + \lambda \theta_{t-1}).
\end{equation}
We compare representative baselines on LLaMA2-7B under 2:4 sparsity in Table~\ref{table:1}, where SR-STE's performance underscores the importance of joint weight-mask optimization.

\begin{table}[!t]

\caption{Perplexity and Zero-shot Accuracy(\%) of Different Pruning Methods on LLaMA2-7B with 2:4 Sparsity}
\label{table:1}
\centering

\begin{threeparttable}
\begin{tabular}{ccc}
\toprule
 Method & \makecell{ Perplexity } &  Average  Zero-shot Acc  \\
 \midrule
\textbf{Dense} & 5.12  & 57.16 \\
\midrule

Wanda & 11.29  & 45.98 \\

MaskLLM & 6.72  & 52.09 \\

Wanda Retraining & 5.78  & 54.73 \\

SR-STE & 5.74  & 54.99\\
\bottomrule
\end{tabular}
\begin{tablenotes}
\small
\item \textit{Note:} We used a similar computational budget for all training-based methods.
\end{tablenotes}
\end{threeparttable}

\end{table}

Despite demonstrating a certain level of effectiveness, the underlying dynamics of mask optimization in SR-STE remain unexplored. Ideally, an optimal mask optimization process should support frequent mask updates early in training to explore favorable sparsity patterns, while ensuring stability and convergence toward the end. Moreover, the learned masks should prioritize retaining weights that contribute meaningfully to model performance. Guided by these objectives, we identify two key limitations in SR-STE that hinder stable and effective mask learning: sparse forward execution and $L_2$ decay. In the following sections, we compare SR-STE with our proposed method, CAST, and demonstrate that dense forwarding combined with the AdamS optimizer addresses these challenges.

\subsection{Discontinuity and Non-differentiability}

First, the sparse forwarding approach used in previous approaches including SR-STE can introduce discontinuities and non-differentiability, which in turn lead to instability during optimization. For example, we consider two consecutive weights at iteration $ t $, $ (\theta_t^{1}, \theta_t^{2}) = (1.01, 1.00) $, where under a 1:2 sparsity constraint, $ \theta_t^{2} $ is masked. The corresponding forward values are $ (\hat \theta_t^{1}, \hat\theta_t^{2}) = (1.01, 0) $. Now, suppose a small gradient update leads to $ (\theta_{t+1}^{1}, \theta_{t+1}^{2}) = (1.00, 1.01) $; the forward parameters become $ (\hat\theta_{t+1}^{1}, \hat\theta_{t+1}^{2}) = (0, 1.01) $. This abrupt change in the forward pass, despite a minor weight update, highlights the discontinuity introduced by masking and adds substantial difficulty to the optimization process. 

Moreover, SR-STE uses the straight-through estimator to approximate gradients for masked weights by treating them as active during backpropagation, thereby ignoring the true effect of masking on the loss. This approximation results in an estimation error, formalized in the following lemma.

\begin{figure}[t]
\centering
\includegraphics[width=3.5in]{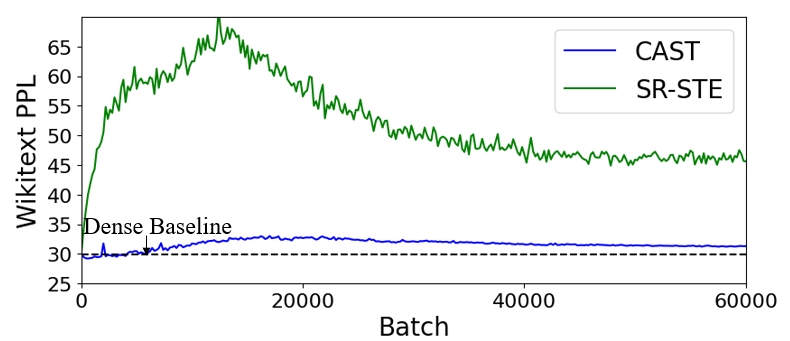}
\centering
\caption{Validation perplexity on WikiText for GPT-2 under dense forward pass of SR-STE and CAST respectively. In SR-STE based training, the dense model's performance deteriorates rapidly, indicating that masked weights become unreliable upon reactivation. In contrast, CAST maintains the dense model in a lossless state throughout retraining, enabling effective and meaningful mask learning.}
\label{fig:4}
\end{figure}

\begin{lemma}
The estimation error introduced by SR-STE grows linearly with the magnitude of the masked weights.
\end{lemma}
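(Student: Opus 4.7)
The plan is to make the notion of ``estimation error'' precise and then extract its leading-order dependence on the masked weight via a first-order Taylor expansion. Concretely, fix a masked coordinate $\theta_t$ at iteration $t$ (so $m_t = 0$ and $\hat{\theta}_t = m_t \theta_t = 0$), and view the scalar map $g(\cdot) = \partial \mathcal{L}/\partial \theta$ as a function of this single coordinate while all other parameters are held at their current values. The ``true'' gradient that would be produced by a genuine dense forward pass at the current parameter value is $g(\theta_t)$, whereas the SR-STE proxy defined in Equation~\eqref{eq:14} evaluates the gradient at the masked forward value $\hat{\theta}_t = 0$, i.e.\ $g(\hat{\theta}_t) = g(0)$. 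The estimation error is therefore $e_t := g(\theta_t) - g(\hat{\theta}_t) = g(\theta_t) - g(0)$.

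First, I would state a mild smoothness hypothesis on the loss, namely that $\mathcal{L}$ is twice continuously differentiable in $\theta$ in a neighborhood of $0$, which is standard for the differentiable building blocks that make up a transformer. Under this assumption $g$ is $C^1$ near the origin, so a first-order Taylor expansion around $\hat{\theta}_t = 0$ gives
\begin{equation*}
g(\theta_t) \;=\; g(0) \;+\; g'(0)\,\theta_t \;+\; O(\theta_t^2),
\end{equation*}
where $g'(0) = \partial^2 \mathcal{L}/\partial \theta^2$ evaluated at the masked point corresponds to the diagonal Hessian entry. Substituting into the definition of $e_t$ yields
\begin{equation*}
|e_t| \;=\; \bigl|g(\theta_t) - g(0)\bigr| \;=\; |g'(0)|\cdot |\theta_t| \;+\; O(\theta_t^2),
\end{equation*}
so to leading order the absolute error scales linearly with $|\theta_t|$, the magnitude of the masked weight, which is the claimed statement.

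Finally, I would briefly note the interpretation: since SR-STE applies $L_2$ decay rather than driving masked entries all the way to zero, the residual magnitudes $|\theta_t|$ of masked coordinates remain nontrivially large throughout training, so the linear error term is not negligible. This also motivates the CAST design, where dense forward propagation is retained and AdamS progressively decays masked entries toward zero, eliminating the very quantity $|\theta_t|$ that drives the SR-STE error.

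\textbf{Main obstacle.} The only delicate point is the smoothness hypothesis: strictly speaking, activations such as ReLU make $g$ non-smooth on a measure-zero set, so one either works with almost-everywhere differentiability or replaces the first-order expansion with a mean-value-type bound $|e_t| \le L\,|\theta_t|$ where $L$ is a local Lipschitz constant of $g$. Either formulation still delivers the same linear-in-$|\theta_t|$ conclusion, so the technical overhead is minor and can be absorbed into a standard regularity assumption.
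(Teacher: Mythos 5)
Your proposal is correct and follows essentially the same route as the paper's Appendix G argument: a first-order Taylor expansion of $g$ about the masked forward value $\hat{\theta}_t = 0$, giving an error of order $|g'(0)|\cdot|\theta_t|$ and hence linear growth in the masked weight's magnitude. The only minor difference is that the paper folds SR-STE's $L_2$ decay term $\lambda\theta_t$ into the error, obtaining $\left|g'(0)-\lambda\right|\cdot\left|\theta_t\right|$, but since that term is itself linear in $\theta_t$ the conclusion is unchanged, and your added remark on non-smooth activations (replacing the expansion by a local Lipschitz bound) is a harmless strengthening of the regularity assumption the paper leaves implicit.
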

\begin{proof}
See Appendix G.
\end{proof}

As a result, the update directions for masked parameters may fail to effectively reduce the language modeling loss. To illustrate this, we track the perplexity under a dense forward pass during SR-STE training, as shown in Figure~\ref{fig:4}. The results reveal that, due to inaccurate gradient estimation, overall model performance deteriorates significantly over time, indicating that the masked parameters become increasingly unreliable for optimizing the language modeling objective as training progresses. As a result, even when these parameters are eventually unmasked, they fail to make meaningful contributions to model improvement, ultimately leading to suboptimal results. In contrast, CAST avoids the above issues by maintaining dense forwarding, enabling a continuous and differentiable optimization process.

\subsection{Bottleneck of $L_2$ regularization}

Furthermore, SR-STE adopts $L_2$ regularization, which decays weights proportionally to their magnitude. This formulation imposes stronger penalties on parameters with larger absolute values. As a result, the growth of high-magnitude masked weights is excessively suppressed, limiting the model’s ability to learn meaningful mask patterns for these potentially important parameters. In contrast, low-magnitude weights receive minimal decay, causing them to oscillate around zero rather than converging to stable masked or unmasked states. Consequently, most of the so-called "mask learning" in SR-STE occurs among these low-magnitude weights, leading to frequent yet uninformative mask updates. Instead of enabling decisive selection among competing high-magnitude weights, the model remains stuck in shallow oscillations that ultimately hinder convergence. In contrast, CAST avoids this issue by adopting $L_1$ regularization, a sparsity-inducing decay mechanism that uniformly drives undesired weights toward zero. This promotes clearer separation between important and unimportant parameters, enabling more effective mask learning, particularly for large-magnitude weights.

To illustrate the difference, we track a randomly selected subset of model weights during training and monitor their statistics. As shown in Table \ref{table:2}, we report three key metrics: Avg. Param Magnitude, representing the average magnitude of unmasked parameters; Avg. Mag. @ Last Flip, indicating the average magnitude of weights when their mask status last changed; and Prog. @ Last Flip, denoting the average training progress percentage at which the final mask update occurred. The results show that, compared to our method, SR-STE training exhibits frequent mask oscillations throughout the training process rather than converging early. Moreover, most mask updates in SR-STE occur on weights near zero, indicating that the method fails to make decisive and meaningful selections among large-magnitude weights and instead oscillates around less important, small-magnitude ones. Further details on mask learning behaviors can be found in the Appendix C.
\begin{table}[!t]
\renewcommand{\arraystretch}{1.3}
\caption{Mask Learning Statistics During SR-STE and CAST Retraining}
\label{table:2}
\centering
\begin{threeparttable}
\begin{tabular}{c|ccc}
\toprule
 Method & \makecell{ Avg. \\ Params Mag.} & \makecell{ Avg. Mag. \\ @ Mask Flip} & Prog. @  Last Flip\\
\midrule
SR-STE & 0.095 & 0.006 & 46.55 \% \\
\midrule
CAST & 0.142 & 0.106 & 16.87\% \\
\bottomrule
\end{tabular}

\begin{tablenotes}
\small
\item \textit{Note:} In CAST, we maintain a dense forward pass and apply decay to masked weights during backward propagation. Importantly, average weight magnitudes are computed only for unmasked parameters for both methods.
\end{tablenotes}
\end{threeparttable}

\end{table}

\section{Experiments}

We evaluate CAST across multiple model families, comparing it against both training-free and training-based state-of-the-art methods. Below, we summarize the datasets, baselines, evaluation metrics, and key results. In addition, we perform ablation studies to assess the individual contributions of each component within CAST. We further conduct a series of experiments to address the following key questions. First, how effectively does CAST’s performance scale under extended training token budgets? Second, can post-training quantization further enhance the compression efficiency of CAST without compromising accuracy? Third, how well do CAST-trained sparse models transfer to downstream fine-tuning tasks? Finally, what practical end-to-end inference speedups can CAST achieve in practical deployment scenarios? Detailed results and analyses are provided in the sections below.

\subsection{Experiment Setup}

\begin{table*}[t]
\centering
\caption{Performance of Various Sparsity‑Aware Training Methods on Seven Zero‑Shot Tasks and WikiText Perplexity.}
\resizebox{0.95\textwidth}{!}{%
\begin{threeparttable}
\begin{tabular}{cc|cccccccc|c}
\toprule
\textbf{Methods} &
\makecell{Tokens \\Trained}  &
\textbf{HellaS.} &
\textbf{RACE} &
\textbf{PIQA} &
\textbf{WinoG.} &
\textbf{ARC-e} &
\textbf{ARC-c} &
\textbf{OBQA} &
\textbf{Average} &
\makecell{Wikitext \\PPL} \\
\midrule
LLaMA2-7B & 2T  & 57.03  & 44.11  & 78.07  & 69.38  & 75.38  & 42.92  & 33.20  & 57.16 & 5.12 \\ 
CAST$^\dagger$ & 40B & 56.13 & 40.86 & 77.58 & 69.53 & 77.78 & 47.18 & 33.60 & 57.52 & 5.21 \\ 
\midrule

Wanda & \ding{55} & 41.05  & 35.02  & 70.78  & 62.67  & 61.99  & 27.56  & 22.80  & 45.98 & 11.29 \\ 
MaskLLM & 2B & 50.91  & \textbf{40.77}  & 74.92  & 64.48  & 69.57  & 36.00  & 28.00  & 52.09 & 6.72 \\ 
Naive Retraining & 10B & 53.90 & 38.28 & 76.61 & 68.27 & 75.21 & 41.21 & 29.60 & 54.73 & 5.78 \\ 
SR-STE & 10B & 54.02  & 39.02 & 76.88 & \textbf{68.35}  & 75.58 & 41.46 & 29.60 & 54.99 & 5.74 \\ 
CAST & 7.5B & \textbf{54.50} & 40.48 & \textbf{77.09} & 68.27 &\textbf{76.52} & \textbf{43.68} & \textbf{30.80} & \textbf{55.91} & \textbf{5.58} \\ \midrule
LLaMA2-13B & 2T & 60.15  & 44.59  & 79.27  & 72.45  & 78.93  & 47.18  & 34.60  & 59.60 & 4.57 \\ 
CAST$^\dagger$ & 15B & 58.01 & 41.24 & 77.42 & 72.45  & 79.50 & 49.06 & 34.80 & 58.93 & 4.71 \\ \midrule
Wanda & \ding{55} & 46.96  & 38.09  & 74.05 &  66.69 & 68.64  & 34.81   & 25.00  & 50.61 & 8.47 \\ 
MaskLLM & 2B & 55.09  & \textbf{41.24}  & 77.69  & 67.80 & 73.15  & 40.44  & 30.00  & 55.06 & 5.85 \\ 
Naive Retraining & 10B & 58.08 & 39.43 & 78.07 & 71.35 & 77.23 & 47.77 & 33.60 & 57.93 & 5.08 \\
SR-STE & 10B & \textbf{58.27}  & 40.38 & \textbf{78.32} & 71.11 & 78.29  & \textbf{47.79} & 34.80 & 58.42 & 5.04 \\
CAST & 7.5B & 58.14 & 40.67 & 78.13 & \textbf{72.30} & \textbf{78.83} & \textbf{47.79} & \textbf{35.40} & \textbf{58.75} & \textbf{4.91} \\  

\midrule
LLaMA3-8B & 15T & 60.10  & 40.00  & 79.43 & 73.56  & 80.26  & 50.00  & 34.80 & 59.74 & 5.76 \\ 
CAST$^\dagger$  & 40B & 57.90 & 40.77 & 79.27 & 71.67 & 81.40 &50.43 & 34.60& 59.43 & 6.33 \\  \midrule

Wanda & \ding{55}  & 37.09 & 32.63 & 67.41 & 59.75 & 56.48 & 25.85 & 18.00 & 42.46 & 22.97 \\ 
MaskLLM & 2B & 53.89  & 37.79  & 77.86 & 67.88  & 71.88 & 38.73  & 30.00 & 54.00 & 8.50 \\ 
Naive Retraining & 10B & 56.07 & 39.43 & 78.56 & 70.48 & 78.78 & 46.41 & 32.20  & 57.42 & 7.26 \\ 
SR-STE & 10B & 56.18 & \textbf{39.81} & 78.89 & 70.64 & 78.57 & 47.18 & 31.80 & 57.58 & 7.22 \\
CAST & 7.5B & \textbf{56.41} & \textbf{39.81} & \textbf{79.05} & \textbf{71.74} & \textbf{79.54} & \textbf{48.89} & \textbf{33.20} & \textbf{58.38} & \textbf{6.85} \\ 

\bottomrule
\end{tabular}%
\begin{tablenotes}
\small
\item \textit{Note:} We ensure that all training-based methods consume approximately the same amount of computational resources, except for CAST$^\dagger$. All evaluation results are obtained from our own experiments, except for MaskLLM, which are reported directly from the corresponding papers. We use \textbf{bold} to indicate the best result.
\end{tablenotes}
\end{threeparttable}
}
\label{table:3}
\end{table*}

\textbf{Model Configurations.} We evaluate the performance of CAST across three open-sourced LLM families: LLaMA \cite{ grattafiori2024llama, touvron2023llama}, which includes both LLaMA2 and LLaMA3 models, GPT-2 \cite{brown2020language}, and the OPT \cite{zhang2022optopenpretrainedtransformer} models. All models are trained to follow a strict 2:4 semi-structured sparsity pattern to ensure practical speedup on modern hardware. We select $n=2$ for weight scaling module. Optimal hyperparameters are determined via grid search, with detailed configurations and training settings provided in Appendix B.

\textbf{Data.} To ensure comparability and isolate the effect of our method, we avoid using significantly stronger datasets than those used in original model pretraining. This ensures that performance gains are due to the method rather than differences in data quality. For LLaMA models, we adopt Dolmino-Mix-1124—an open-source, high-quality, domain-targeted dataset which achieves zero-shot performance comparable to the original LLaMA training corpus. For OPT and GPT-2 models, we use the weaker C4 dataset to remain consistent with their original pretraining data.

\begin{table*}[ht]
\centering
\caption{Accuracy(\%) on Various Few-Shot Tasks for the LLaMA Models. }
\resizebox{0.8\textwidth}{!}{%
\begin{tabular}{cccccccccc}
\toprule 
  Model & \makecell{Weight \\Pattern} & MMLU & MATH & TriviaQA & NQ & BoolQ & CSQA & SciQ & Average \\ \midrule
  \multicolumn{1}{c}{\multirow{2}{*}{LLaMA2-7B}}  &Dense & 45.74 &  5.56   &  \textbf{64.15} & \textbf{26.34} & 78.90 & 56.01 & \textbf{97.00} & 53.39\\
 &Sparse & \textbf{52.34}  &  \textbf{7.64} & 62.32 & 25.90 & \textbf{79.94} & \textbf{64.62} &  \textbf{97.00} & \textbf{55.68} \\ \midrule
 \multicolumn{1}{c}{\multirow{2}{*}{LLaMA2-13B}}  &Dense & 55.13 &  6.78   &  \textbf{70.45} & \textbf{30.69} & 83.21 & 67.57 & \textbf{97.50} & 58.76 \\
 &Sparse & \textbf{56.07}  &  \textbf{8.12} & 67.65 & 28.83 & \textbf{84.31} & \textbf{71.42} & 97.30 & \textbf{59.10}\\ \midrule 
 \multicolumn{1}{c}{\multirow{2}{*}{LLaMA3-8B}}  &Dense & \textbf{65.43} &  13.42   & \textbf{71.69}  & \textbf{29.39} & 81.93 & \textbf{73.62} & 97.60 & \textbf{61.87}\\
 &Sparse & 62.15  &  \textbf{13.44} & 65.84 & 27.53 & \textbf{82.66} & 73.38 & \textbf{97.80} & 60.40\\
 \bottomrule
\end{tabular}
}
 \label{table:4}
\end{table*}

\textbf{Baselines.} We compare our method against both training-free and training-based sparsification approaches. For training-free baselines, we include Wanda \cite{sun2023simple} and SparseGPT \cite{frantar2023sparsegpt}. Among training-based methods, we evaluate three representative approaches: (1) MaskLLM \cite{NEURIPS2024_0e9a05f5}, which learns weight masks through gradient-based optimization on open-source datasets; (2) naive retraining, which fine-tunes the model weights following one-shot pruning, without updating the masks. Notably, we find that the choice of one-shot mask selection strategy at the beginning has minimal impact on final performance. As magnitude-based pruning often outperforms more complex criteria like Wanda or SparseGPT while requiring less compute, we adopt it as the default in the retraining setting; and (3) SR-STE, originally designed for sparse pretraining, which we adapt for retraining by initializing it with dense pretrained models. To ensure fair comparison, all training-based methods are allocated similar compute budget. 

\begin{table*}[ht]
  \caption{Perplexity Results on WikiText‑2 Dataset for 2:4‑Sparsified Language Models. }
  \centering
\resizebox{0.8\textwidth}{!}{%
\begin{threeparttable}
\begin{tabular}{cccccccccc}
\toprule
\multicolumn{1}{l}{} &   
\multicolumn{1}{l}{} & 
  \multicolumn{1}{l}{} &
  \multicolumn{3}{c}{OPT} &
  \multicolumn{4}{c}{GPT2} \\ \cmidrule(lr){4-10}
Methods & Weight Updates &
 Tokens Trained &
  125M &
  350M &
  1.3B &
  124M &
  350M &
  774M &
  1.5B \\ \midrule
Dense &  -  &  - &27.76 &22.00 &14.62 & 29.95 & 21.72 & 19.43 & 17.40 \\ \midrule
Wanda  & \ding{55} & \ding{55}  &60.91 &50.16  &23.92   & 115.64  & 63.71  & 49.97   & 30.44  \\
SparseGPT & \ding{51} & \ding{55}   & 45.58  & 40.33 & 29.03 & 50.09  & 31.03  & 25.98  & 21.14  \\
Naive Retraining  & \ding{51} & 10B  &  31.74 &  25.86  &  17.60  & 40.58 & 26.85  & 22.69 & 20.80  \\
SRSTE & \ding{51} &  10B  &    32.17  &  25.18  & 17.02 & 38.30 & 23.74  & 21.43 & 19.13  \\
AST & \ding{51} &  7.5B  &    30.08  &  24.26  & 15.67 & 32.06 & 23.40  & 20.78 & 18.14  \\
CAST & \ding{51}  & 7.5B & \textbf{28.32} &  \textbf{23.57}  & \textbf{14.92} & \textbf{31.24} & \textbf{23.12} & \textbf{19.97} & \textbf{17.35}  \\  \bottomrule
\end{tabular} %

\end{threeparttable}
}
  \label{table:5}
\end{table*}

\textbf{Evaluation.} We evaluate all models using WikiText-2 perplexity. In addition, we evaluate zero-shot and few-shot performance on the LLaMA2 and LLaMA3 model families using EleutherAI’s LM Harness \cite{gao2021framework}. These tasks include ARC-Easy and ARC-Challenge \cite{Clark2018ThinkYH}, OpenBookQA \cite{mihaylov2018can}, WinoGrande \cite{sakaguchi2021winogrande}, PIQA \cite{Bisk2020}, HellaSwag \cite{zellers2019hellaswag}, RACE \cite{lai2017racelargescalereadingcomprehension}, CommonSenseQA \cite{talmor-etal-2019-commonsenseqa}, Science Questions \cite{welbl-etal-2017-crowdsourcing}, and BoolQ \cite{clark2019boolqexploringsurprisingdifficulty}. We also evaluate performance on knowledge-intensive tasks, including MMLU \cite{hendrycks2020measuring}, GSM8K \cite{cobbe2021training}, MATH \cite{hendrycksmath2021}, TriviaQA \cite{joshi2017triviaqalargescaledistantly}, and Natural Questions \cite{kwiatkowski-etal-2019-natural}.

\subsection{Main Results}

Table~\ref{table:3} reports the results for LLaMA models. Since our method incorporates knowledge distillation, which increases FLOPs by roughly $1/3$, we decrease its training tokens by a similar amount to match the compute budget. A detailed analysis of actual time complexity is provided in Appendix F. Across nearly all model settings, our method consistently outperforms prior approaches under similar computational budgets. For instance, on LLaMA2-7B, CAST reduces the perplexity by 0.16 and 0.20, and improves zero-shot accuracy by 0.92\% and 1.18\% over SR-STE and naive retraining, respectively. To evaluate the potential of our approach when more tokens are trained, we further conduct an extended training run within our computational capacity denoted by CAST$^\dagger$. With increased training tokens, our 2:4 sparse models match or even surpass their dense counterparts in both perplexity and zero-shot accuracy. 

To further assess our sparse models, we report few-shot results on challenging benchmarks in Table~\ref{table:4}, including knowledge-intensive tasks like MMLU and MATH. For models pretrained on weaker datasets (e.g., LLaMA2), our sparse models not only match but often surpass their dense counterparts, while using only a fraction of the original training tokens, partly due to a pretraining corpus more focused on knowledge-intensive tasks. However, even for models pretrained on stronger closed‑source datasets like LLaMA3, our method still introduces negligible performance degradation. These results suggest that factual knowledge and reasoning capabilities can be effectively recovered through sparse retraining. In contrast, perplexity typically requires more training tokens to fully recover.

Table~\ref{table:5} summarizes the results for GPT-2 and OPT models, which follow a similar trend. Our method consistently outperforms the baselines by a significant margin. Furthermore, we observe that performance recovery is generally easier for larger models across all three model families. For example, within the GPT-2 family, GPT2-XL performs similarly to its dense counterpart on the Wikitext dataset, while smaller variants like GPT2 exhibit a more pronounced performance gap. This suggests that larger models in the same model families may be undertrained relative to their model size, making them more resilient to compression. This trend aligns with findings in the quantization literature \cite{kumar2024scalinglawsprecision}, which show that undertrained models tend to degrade less under compression. Importantly, our 2:4 sparse models significantly outperform dense models with a similar number of non-zero parameters, highlighting the practical effectiveness of our method.

Finally, we include results from our preliminary work, AST, in Table~\ref{table:5}. We limit the comparison to GPT and OPT models, as AST was trained on a weaker dataset than the one used for LLaMA in CAST. While AST already demonstrates competitive performance against existing baselines on these models, our proposed method CAST consistently achieves the best results under strict 2:4 sparsity, highlighting its superior ability to induce semi-structured sparsity through a continuous and differentiable optimization process.

\begin{table*}[t]
\centering
\caption{Ablation Study on the Performance Impact of CAST Components Across Three Model Families.}
\label{table:6}
\resizebox{1.0\textwidth}{!}{%
\begin{threeparttable}
\begin{tabular}{cc|c|cccccccc|c|c}
\toprule
\multicolumn{1}{l}{} &   
\multicolumn{1}{l}{} &  
  \multicolumn{9}{c}{LLaMA2-7B} &
  \multicolumn{1}{c}{OPT-125m}  &
    \multicolumn{1}{c}{GPT2} \\ \midrule
\textbf{Methods} &
\makecell{Tokens \\Trained}  &
\makecell{Wikitext \\PPL} &
  \textbf{HellaS.} &
  \textbf{RACE} &
  \textbf{PIQA} &
  \textbf{WinoG.} &
  \textbf{ARC-e} &
  \textbf{ARC-c} &
  \textbf{OBQA} &
  \textbf{Average} &
  \makecell{Wikitext \\PPL} &
  \makecell{Wikitext \\PPL} \\ \midrule
CAST & 7.5B & 5.56 & 54.50 & 40.48 & 77.09 & 68.27 & 76.52 & 43.68 & 30.80 &  55.91   & 28.32 &  31.24 \\ \midrule
CAST w/o KD & 10B &  5.61  & 54.42 & 38.85 & 75.57 & 69.14 & 75.59 & 43.60 & 30.40 & 55.37  & 31.05 & 38.58\\ 

  {CAST w Fixed Mask} & 7.5B & 5.79 & 53.49 & 39.43 & 76.44 & 68.11 & 76.52 &41.89 &  28.60  & 54.93   & 29.90 & 32.74\\ 
{CAST w SRSTE} &7.5B& 5.67 & 54.60 & 38.76 & 77.20 & 68.03 & 41.89 & 76.39 & 28.80 & 55.10 & 29.45 & 32.61\\ 
\makecell{ CAST  \\ w/o Weight Scaling}  & 7.5B  & 5.58 & 54.78 & 40.29 & 76.09 & 67.96 & 76.09 & 43.00 & 30.60 & 55.54   & 28.43 & 31.43 \\
\midrule
{Naive Retraining} & 10B & 5.78 & 53.90 & 38.28 & 76.61 & 68.27 & 75.21 & 41.21 & 29.60 & 54.73 &   31.74& 40.58 \\
{ Naive Retraining  w KD}  & 7.5B & 5.79  &  53.47 & 39.43 & 76.33& 68.27 & 76.56   &  42.23& 28.80 & 55.01 & 29.54 & 32.93\\
\makecell{ Naive Retraining  \\ w KD \& AdamS}  & 7.5B  & 5.58 & 54.78 & 40.29 & 76.09 & 67.96 & 76.09 & 43.00 & 30.60 & 55.54   & 28.43 & 31.43 \\

 \bottomrule
\end{tabular}%
\begin{tablenotes}
\small
\item \textit{Note:} In our setting, "KD" refers to knowledge distillation. Methods without knowledge distillation employed additional tokens for fair comparison. "CAST w Fixed Mask" disables mask learning by fixing the model mask and omitting AdamS updates. "CAST w SR-STE" replaces AdamS with SR-STE for learning the sparse mask.

\end{tablenotes}

\end{threeparttable}
}
\end{table*}

\begin{figure*}[ht]
    \centering
    \begin{subfigure}[b]{0.23\textwidth}
        \includegraphics[width=\textwidth]{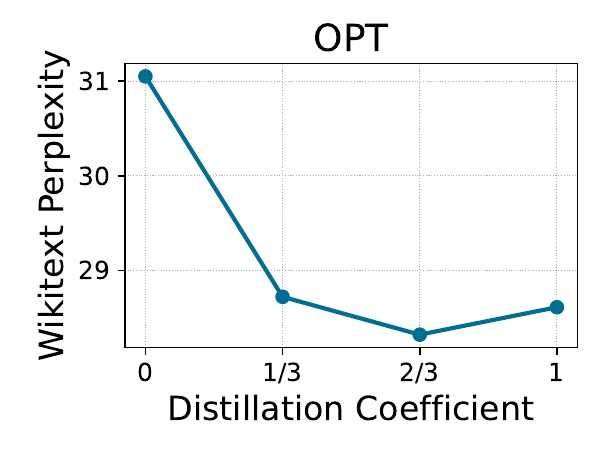}
        \caption{OPT Wikitext perplexity under different $\eta$.}
    \end{subfigure}
        \hfill
    \begin{subfigure}[b]{0.23\textwidth}
        \includegraphics[width=\textwidth]{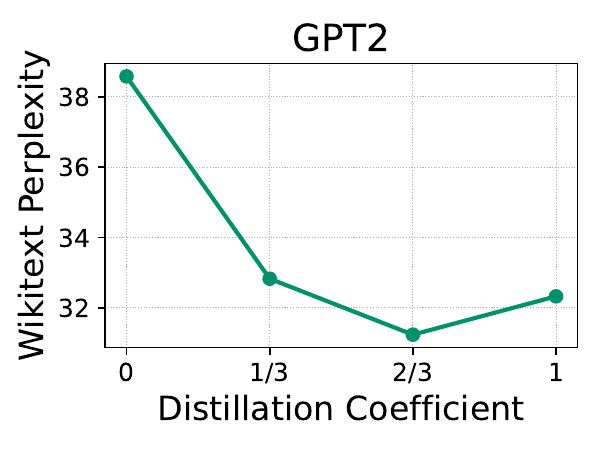}
        \caption{GPT2 Wikitext perplexity under different $\eta$.}
    \end{subfigure}
    \hfill
    \begin{subfigure}[b]{0.23\textwidth}
        \includegraphics[width=\textwidth]{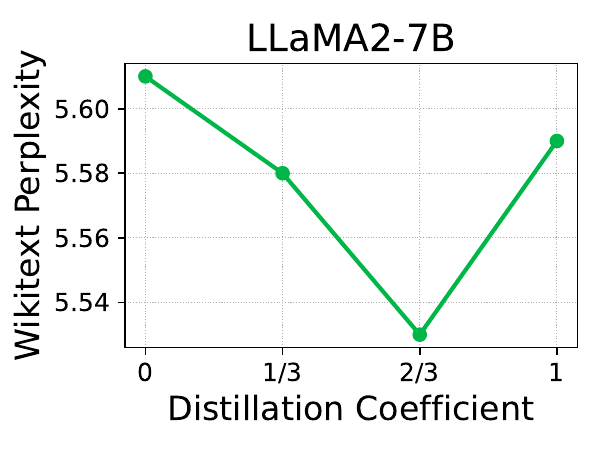}
        \caption{LLaMA2-7B Wikitext perplexity under different $\eta$.}
    \end{subfigure}
    \hfill
    \begin{subfigure}[b]{0.23\textwidth}
        \includegraphics[width=\textwidth]{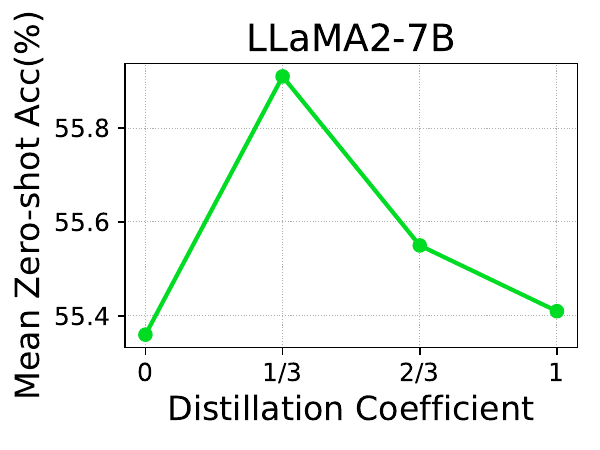}
        \caption{Accuracy of LLaMA2-7B under different values of $\eta$.}
    \end{subfigure}

    \caption{Performance of  OPT-125M, GPT2, and LLaMA2-7B under varying knowledge distillation coefficients ($\eta$). Note that for $\eta = 0$ (i.e., no teacher supervision), additional training steps are provided to match the computational cost.}
    \label{fig:5}
\end{figure*}

\subsection{Ablation Study}
\subsubsection{Ablation Study on Each Components of CAST}

In this section, we present an ablation study to assess the individual contribution of each component in CAST across three models from different families. Compared to naive retraining, CAST incorporates three key techniques: knowledge distillation, AdamS, and a weight scaling module. We conduct the ablation in two ways: first we remove each component from CAST individually to examine its impact. Notably, since AdamS is a mask learning method, for comparison we replace AdamS with alternative mask learning strategies, including fixed mask retraining and SR-STE. As shown in the top section of Table~\ref{table:6}. For the LLaMA2-7B model, we observe that removing each lead to respective performance drops of 0.54\%, 0.81\%, and 0.37\% in zero-shot accuracy, demonstrating the effectiveness of each component.

Secondly, we progressively add knowledge distillation, AdamS, and the weight scaling module on top of the naive retraining baseline and measure their effects. The results also demonstrate that all three components contribute significantly to performance improvements across a range of model architectures. As shown in the second section of Table~\ref{table:6}, on the LLaMA2-7B model, knowledge distillation, AdamS, and the weight scaling module improve zero-shot accuracy by 0.28\%, 0.53\%, and 0.37\%, respectively. This progressive enhancement demonstrates the complementary benefits of these components and further validates the design of CAST as a unified and effective retraining framework.

\subsubsection{Ablation Study on Distillation Coefficient}

We further conduct an ablation study on the distillation coefficient $\eta$ introduced in Equation~\eqref{eq:13} using the OPT-125M, LLaMA2-7B, and GPT-2 models. We vary $\eta$ from 0 to 1, where $\eta = 1$ corresponds to using only the distillation loss, and $\eta = 0$ corresponds to uses only the standard cross-entropy loss. For the non-distillation case ($\eta = 0$), we allocate additional training tokens to match the overall computational budget.

Figure~\ref{fig:5} presents both perplexity and zero-shot accuracy results. We observe that a higher distillation coefficient (e.g., $\eta = \frac{2}{3}$) consistently leads to lower perplexity across all models, confirming its effectiveness for optimizing language modeling objectives. However, smaller values of $\eta$ yield better zero-shot accuracy on LLaMA2-7B. We attribute this discrepancy to a mismatch between the teacher model’s output distribution and the target data distribution: the Dolmino-Mix dataset exhibits stronger zero-shot capabilities, while the LLaMA2-7B teacher model is more aligned with perplexity optimization. Varying the distillation strength shifts the learning focus between mimicking the teacher’s output distribution and fitting the data distribution, resulting in a tradeoff between perplexity and accuracy.

Moreover, relying exclusively on either distillation or cross-entropy loss results in suboptimal outcomes. In particular, using only cross-entropy—even with extended training—performs significantly worse. Based on these observations, we adopt $\eta = \frac{1}{3}$ in our extended training runs for the LLaMA models, striking a balance between convergence stability and generalization performance.

\begin{figure*}[ht]
    \centering
    \begin{subfigure}[b]{0.32\textwidth}
        \includegraphics[width=\textwidth]{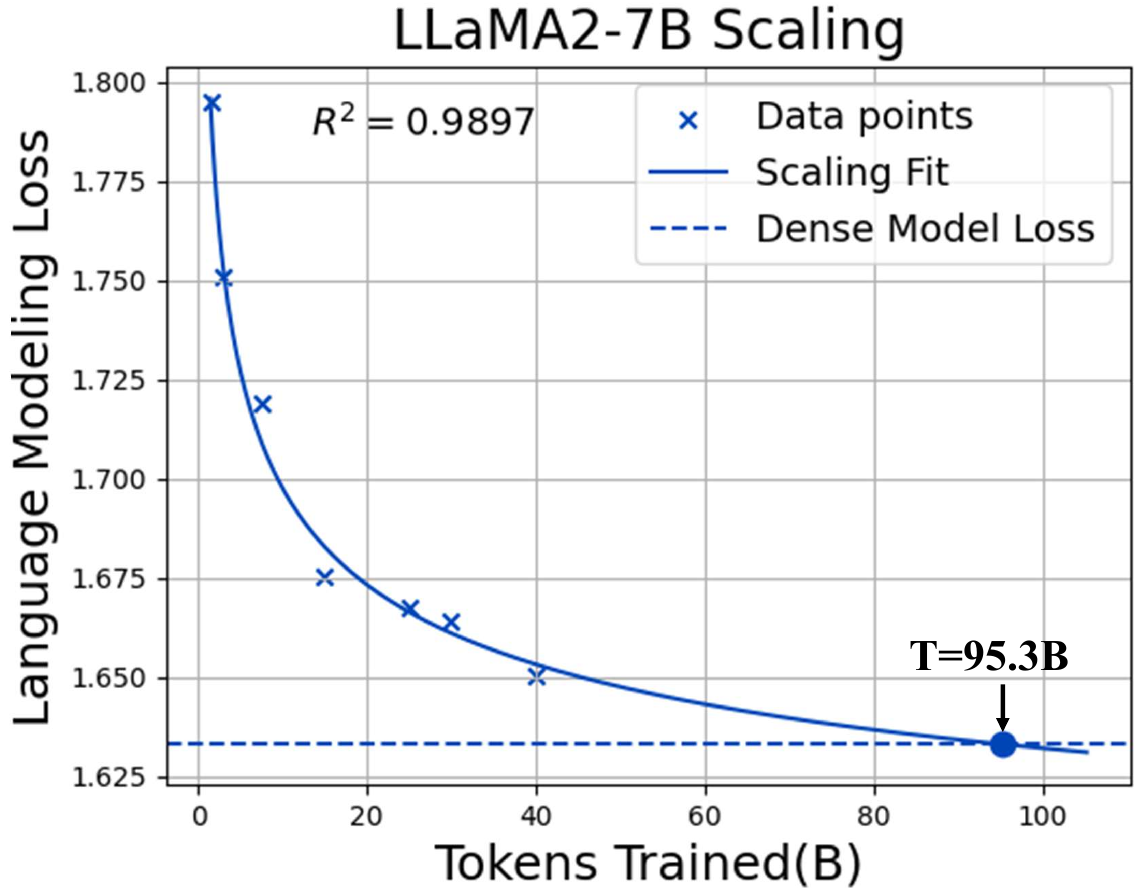}
        \caption{Scaling Tokens with LLaMA2-7B.}
    \end{subfigure}
        \hfill
    \begin{subfigure}[b]{0.32\textwidth}
        \includegraphics[width=\textwidth]{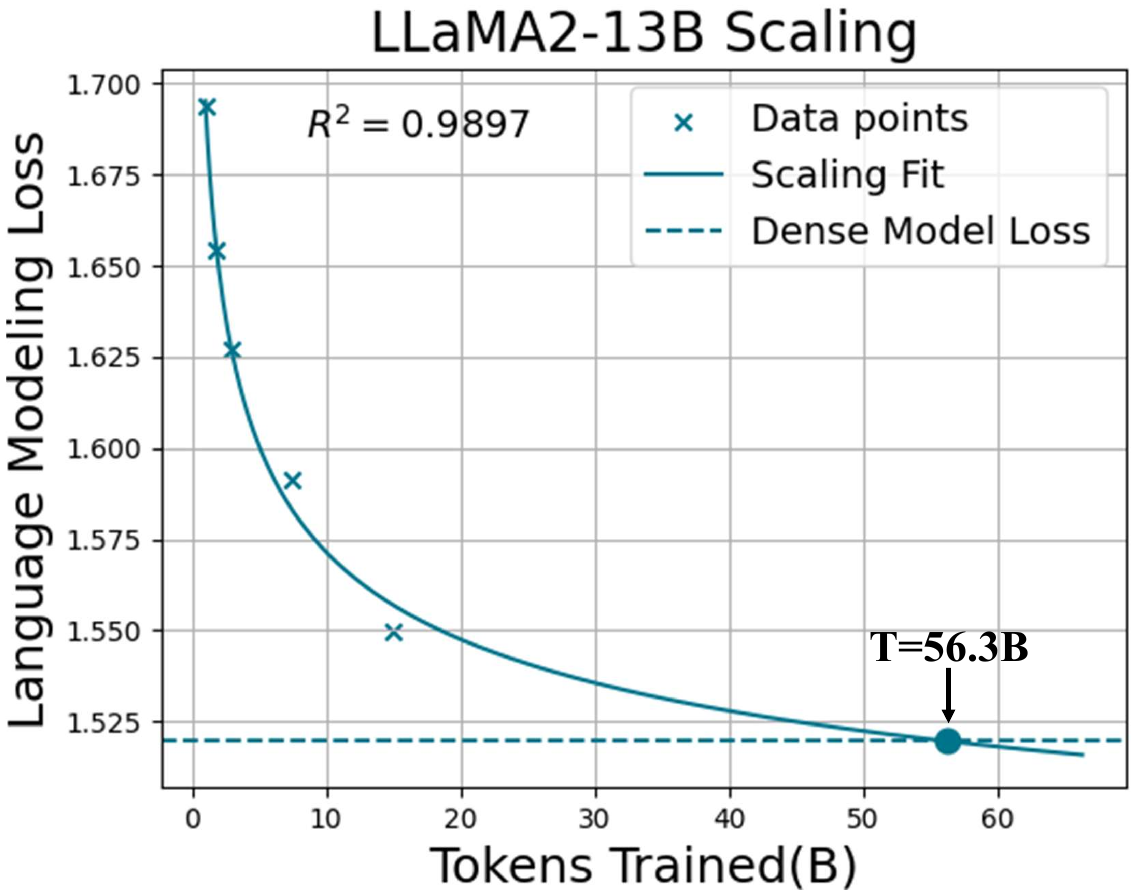}
        \caption{Scaling Tokens with LLaMA2-13B.}
    \end{subfigure}
    \hfill
    \begin{subfigure}[b]{0.32\textwidth}
        \includegraphics[width=\textwidth]{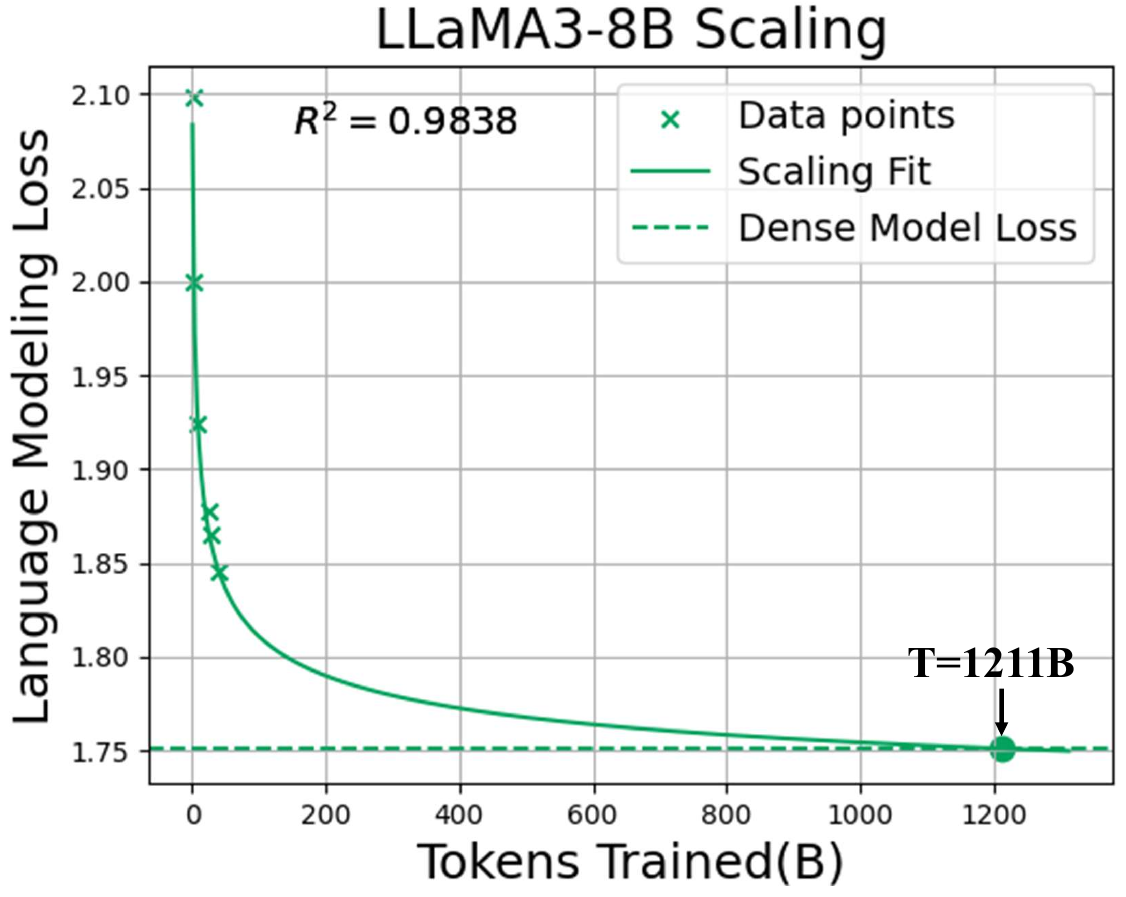}
        \caption{Scaling Tokens with LLaMA3-8B.}
    \end{subfigure}

    \caption{Scaling properties of LLaMA2-7B, LLaMA2-13B, and LLaMA3-8B.}
    \label{fig:6}
\end{figure*}

\subsection{Retraining Scaling Law}

\begin{table*}[t]
  \caption{Wikitext Perplexity Results using AWQ Quantization}
  \centering
\resizebox{0.9\textwidth}{!}{%
\begin{threeparttable}
\begin{tabular}{c|c|cccc|cc|ccc|ccc}
\toprule  
\multicolumn{2}{c}{Model} & 
  \multicolumn{4}{c}{GPT2} &
  \multicolumn{2}{c}{OPT} &
    \multicolumn{6}{c}{LLaMA} \\ \midrule

  \multicolumn{2}{c}{Metric} &
  \multicolumn{4}{c}{Perplexity} &
  \multicolumn{2}{c}{Perplexity} & 
    \multicolumn{3}{c}{Perplexity} & 
  \multicolumn{3}{c}{Zero-shot Accuracy(\%)}
 \\ \midrule
 
\makecell{Weight \\Pattern} &
Bits &
124M &
  350M &
  774M &
  1.5B &
  125M &
  1.3B  &
  2-7B &
2-13B &
3-8B &
2-7B &
2-13B &
3-8B \\ \midrule

\multicolumn{1}{c|}{\multirow{3}{*}{Dense}}  & 16 & 29.95 & 21.72 & 19.43 &17.40 & 27.76 & 14.62 & 5.12 & 4.57 & 5.76 & 57.16 & 59.60 & 59.74  \\
  &  4 & 33.78 & 22.52 & 19.73  & 17.70 &   29.08 & 14.93 &5.23  &  4.65 & 6.12  & 55.96  & 58.91  &  59.46  \\
  &  3 & 93.66 & 27.99 & 21.45  & 18.70 &   35.78 & 16.28 & 5.82  &  4.98 & 7.72  & 55.00  & 57.66  &  56.56  \\
  \midrule
\multicolumn{1}{c|}{\multirow{3}{*}{Sparse}}  & 16 & 31.24 & 23.12 & 19.97 & 17.35 & 28.32  & 14.92 & 5.21 & 4.71 & 6.33 & 57.52 & 58.93 & 59.43  \\
  &  4 & 35.75 & 24.48 &  20.21 & 17.49 &  29.31 &  15.07  & 5.29  &  4.84 & 6.42  & 57.22 & 58.91  &  59.11  \\
  &  3 & 75.83 & 29.24 &  22.19 & 18.44 &  33.60  &  15.82 &5.56  &  5.02 & 6.84 & 56.25  & 57.67  &  57.99  \\

  \bottomrule
\end{tabular} %
\begin{tablenotes}
\small
\item \textit{Note:} We use a group size of 128 for OPT and LLaMA models, and 64 for GPT-2 models due to compatibility constraints.
\end{tablenotes}
\end{threeparttable}
}
  \label{table:7}
\end{table*}

An important question we aim to address is: \textit{How many training tokens are required for a sparse model to fully match the performance of its dense counterpart?} Due to computational constraints, it is often infeasible to train models until full recovery is achieved. Therefore, we seek a practical method for estimating the training cost required to restore sparse model performance. This question is particularly relevant to CAST, as its dense-forward training strategy implies that the model only attains its final sparse form at the end of training.

A powerful tool to depict the scaling behavior is the Chinchilla scaling law \cite{10.5555/3600270.3602446}, which estimates language modeling loss $L(N, D)$ with:

\begin{align}
L(N, D) = E + \frac{A}{N^{\alpha}} + \frac{B}{D^{\beta}},
\label{eq:16}
\end{align}
where $N$ denotes the number of model parameters and $D$ the number of training tokens, $\alpha$ and $\beta$ are constant coefficients while $A,B$ vary across models. However, it is not feasible to establish the full Chinchilla scaling law in our case, as our experiments are limited to a small number of models pretrained on different datasets. Fortunately, by fixing the model architecture, we are able to isolate and examine the effect of the number of training tokens on model loss. This allows us to formulate a token-only scaling law independently for each model:

\begin{align}
L_i(D) = A_i + \frac{B_i}{D^{\beta}},
\label{eq:17}
\end{align}
where $A_i$ and $B_i$ are different parameters for each model. To evaluate model performance under different training budgets, we conduct a series of experiments with varying numbers of training tokens. For each setting, we tune the decay factor to ensure effective sparsity induction and optimal model performance. Furthermore, following the original Chinchilla paper \cite{10.5555/3600270.3602446}, we adopt $\beta=0.2849$ and fit our data points to find optimal $A_i$ and $B_i$ for each model using a linear regression model. We validate the suitability of this formulation in two ways. First, we compute the coefficient of determination ($R^2$) for each model fit and observe consistently high values (around 0.99) across all three models. Second, we estimate the performance of the longest-trained model by extrapolating from shorter training runs. The prediction error for perplexity remains below 0.03, further supporting the predictive validity of our approach.

\begin{table*}[t]
\centering
\caption{Speedup Results Using TensorRT‑LLM on RTX4090 and L20 GPUs With Different Input and Output Sequence Lengths, Measured by Throughput (Tokens/s). }
\resizebox{0.86\textwidth}{!}{%
\begin{tabular}{cc|ccc|ccc}
\toprule 
 \multicolumn{2}{c|}{Model} & \multicolumn{3}{c|}{LLaMA2-7B} & \multicolumn{3}{c}{LLaMA2-13B}  \\ \midrule
\multicolumn{1}{c}{GPU} & Inp Len, Out Len & Sparse & Dense & Efficiency & Sparse & Dense & Efficiency  \\ \midrule
\multicolumn{1}{c}{\multirow{4}{*}{L20}} & 128, 128 & 54.75 & 29.86 & 1.83x  & 35.42 & 17.81  &  2.00x  \\
\multicolumn{1}{c}{} & 128, 1024 & 53.81 & 29.57 & 1.82x  & 34.67 & 17.67  &  1.96x  \\
\multicolumn{1}{c}{} & 1024, 128 & 52.49 & 29.18 & 1.80x  & 33.88 & 17.51  &  1.93x  \\
\multicolumn{1}{c}{} & 1024, 1024 & 51.64 & 28.94 &  1.78x  & 33.51 & 17.40  & 1.92x   \\ \midrule
\multicolumn{1}{c}{\multirow{4}{*}{H800}} & 128, 128 & 108.54 & 75.54 &1.44x & 63.48 & 30.40  & 2.09x   \\
\multicolumn{1}{c}{} & 128, 1024 & 106.41 & 74.52 & 1.43x & 62.53 & 30.15 & 2.07x   \\
\multicolumn{1}{c}{} & 1024, 128 & 104.11 & 73.51 & 1.42x & 59.88 & 29.53  & 2.03x   \\
\multicolumn{1}{c}{} & 1024, 1024 & 101.05 & 72.45 & 1.40x & 58.80 & 29.34 &  2.00x \\ \midrule
 \multicolumn{2}{c|}{Memory Consumption} & 7.30GB  & 12.57GB & 0.58x & 13.95GB  & 24.25GB  & 0.57x  \\
\bottomrule
\end{tabular}
}
 \label{table:8}
\end{table*}

We present the scaling results in Figure~\ref{fig:6}, along with our predictions for the number of training tokens required to fully recover model performance. Notably, models with higher tokens-per-parameter ratios demand substantially more training data to match the performance of their dense counterparts. Specifically, the estimated token requirements for full recovery are 95.3B for LLaMA2-7B, 56.3B for LLaMA2-13B, and 1211B for LLaMA3-8B. This relationship appears approximately linear with respect to each model’s tokens-per-parameter ratio, echoing trends observed in the quantization scaling law literature \cite{kumar2024scalinglawsprecision}. In contrast, zero-shot accuracy does not exhibit the same trend, likely due to variations in the quality and domain coverage of the original pretraining datasets. Full data points and the corresponding fitted curves used for these predictions are provided in Appendix D.

\subsection{Combination with Quantization}

Model pruning can be further combined with quantization techniques to achieve greater compression ratios. To evaluate the compatibility of our semi-structured sparse models with existing quantization methods, we apply AWQ \cite{lin2023awq} to quantize our best-performing sparse models. Notably, in the original AWQ implementation, the zero-valued weights in our sparse model may be quantized to small non-zero values. To preserve the sparsity pattern even after quantization, we modify the quantization procedure to explicitly retain zero entries during quantization. Although empirically, we find that enforcing this constraint introduces no significant performance degradation.

Results are reported in Table~\ref{table:7}. Interestingly, we observe that the performance degradation for sparse models is significantly smaller than that of dense models at the same quantization rate. As a result, in certain low-precision settings, the sparse quantized model notably outperforms its dense counterpart. We hypothesize that this counterintuitive result stems from the fact that zeros in sparse models introduce no quantization error, effectively reducing the overall quantization noise and leading to less performance degradation. These findings further highlight the potential of combining weight sparsity and quantization for efficient model deployment.

\subsection{End-to-End Speedup and Compression Efficiency}

Table~\ref{table:8} presents the end-to-end decoding speedup results using TensorRT-LLM\footnote{\url{https://github.com/NVIDIA/TensorRT-LLM}}. We evaluate the 2:4 semi-structured sparse versions of LLaMA2-7B and LLaMA2-13B across two GPU architectures, using throughput (tokens per second) as the primary evaluation metric. Across varying input/output lengths and hardware configurations, the sparse models consistently achieve speedups ranging from 1.40× to 2.09× over their dense counterparts. In addition, the memory footprint for model weights is reduced to 57\% of the original size, further enhancing deployment efficiency. Notably, in some cases, the observed speedups exceed the theoretical 2× bound, which we attribute to reduced memory pressure allowing more efficient utilization of high-bandwidth caches.

\subsection{Finetuning Sparse Model on Downstream Tasks}

\begin{table}[t]
\centering
\caption{LoRA Finetuning Results on GSM8K. }
\resizebox{0.44\textwidth}{!}{%
\begin{tabular}{cccc}
\toprule 
Model  & LLaMA2-7B & LLaMA2-13B & LLaMA3-8B  \\ \midrule
Dense & 40.3\% &  49.4\%   &  69.3\%  \\
Sparse & 46.9\%  &  54.4\% & 65.4\%  \\
 \bottomrule
\end{tabular}
}
 \label{table:9}
\end{table}

Another important aspect we examine is the downstream performance of sparse models after fine-tuning. We evaluate both LoRA-based and full-parameter fine-tuning strategies. For LoRA fine-tuning, we assess LLaMA models on the GSM8K dataset using a rank of $r=64$ for both sparse and dense variants. As shown in Table~\ref{table:9}, the sparse models perform comparably to or even slightly better than their dense counterparts. For full parameter fine-tuning, we evaluate GPT-2 models on the GLUE benchmark using sparsity-aware fine-tuning, where only unmasked weights are updated. We compare this against traditional dense fine-tuning on dense models. As shown in Table~\ref{table:10}, sparsity-aware fine-tuning leads to only a minor performance drop relative to the dense counterpart, indicating that sparse models retain strong downstream adaptability despite reduced parameter count. Notably, allowing the sparse model to undergo dense fine-tuning yields no additional performance gains over sparsity-aware fine-tuning. Detailed results for each task are provided in Appendix E. Together, these findings suggest that semi-structured sparsity does not impair downstream transferability.

\begin{table}[t]
\centering
\caption{GLUE Finetuning Results on GPT2 Model}
\resizebox{0.43\textwidth}{!}{%
\begin{tabular}{ccccc}
\toprule   

 Weight Pattern  & 124M & 350M & 774M & 1.5B \\ \midrule
Dense     & 77.79 & 80.45 & 82.17 & 83.80    \\
 Sparse &  76.68 & 78.49 & 81.20 & 82.26 \\ 
  \bottomrule
\end{tabular} %
}

 \label{table:10}
\end{table}


\section{Conclusion}
In this paper, we introduce CAST (Continuous Adaptive Sparse Trainer), a fully continuous and differentiable sparsity-aware training framework designed to efficiently and accurately sparsify pretrained large language models into semi-structured formats. CAST addresses key limitations in prior work, particularly the lack of joint optimization between weights and masks, as well as inaccuracies imposed by sparse forward methods. To overcome these challenges, CAST introduces three core innovations: the AdamS optimizer, a learnable weight-scaling module, and knowledge distillation. Together, these components enable CAST to achieve state-of-the-art performance across multiple model families, producing lossless models under the 2:4 sparsity pattern. We further demonstrate CAST’s scalability by establishing empirical scaling laws and validating its robustness in downstream fine-tuning and quantization-based deployment. Looking ahead, we plan to extend CAST to structured pruning and explore its use in multi-modal and instruction-tuned LLMs, where efficiency and adaptability are critical.


%

\printbibliography

@inproceedings{sun2023simple,
  title={A simple and effective pruning approach for large language models},
  author={Sun, Mingjie and Liu, Zhuang and Bair, Anna and Kolter, J Zico},
  booktitle={ICML},
  year={2023}
}

@inproceedings{frantar2023sparsegpt,
  title={Sparsegpt: {M}assive language models can be accurately pruned in one-shot},
  author={Frantar, Elias and Alistarh, Dan},
  booktitle={ICML},
  pages={10323--10337},
  year={2023}
}

@inproceedings{hubara2021accelerated,
  title={Accelerated sparse neural training: {A} provable and efficient method to find n: m transposable masks},
  author={Hubara, Itay and Chmiel, Brian and Island, Moshe and Banner, Ron and Naor, Joseph and Soudry, Daniel},
  booktitle={NeurIPS},
  pages={21099--21111},
  year={2021}
}

@misc{bengio2013estimatingpropagatinggradientsstochastic,
      title={Estimating or Propagating Gradients Through Stochastic Neurons for Conditional Computation}, 
      author={Yoshua Bengio and Nicholas Léonard and Aaron Courville},
      year={2013},
      eprint={1308.3432},
      archivePrefix={arXiv},
}

@inproceedings{zhou2021learning,
  title={Learning n:m fine-grained structured sparse neural networks from scratch},
  author={Zhou, Aojun and Ma, Yukun and Zhu, Junnan and Liu, Jianbo and Zhang, Zhijie and Yuan, Kun and Sun, Wenxiu and Li, Hongsheng},
  booktitle={ICLR},
  year={2021}
}

@inproceedings{guo2016dynamic,
  title={Dynamic network surgery for efficient dnns},
  author={Guo, Yiwen and Yao, Anbang and Chen, Yurong},
  booktitle={NeurIPS},
  pages={1387--1395},
  year={2016}
}

@article{kullback1951information,
  title={On information and sufficiency},
  author={Kullback, Solomon and Leibler, Richard A},
  journal={The Annals of Mathematical Statistics},
  volume={22},
  number={1},
  pages={79--86},
  year={1951},
  publisher={JSTOR}
}

@inproceedings{lin2023awq,
  title={Awq: {A}ctivation-aware weight quantization for {LLM} compression and acceleration},
  author={Lin, Ji and Tang, Jiaming and Tang, Haotian and Yang, Shang and Dang, Xingyu and Han, Song},
  booktitle={MLSys},
  pages={87--100},
  year={2024}
}

@misc{touvron2023llama,
  title={Llama 2: {O}pen foundation and fine-tuned chat models},
  author={Touvron, Hugo and Martin, Louis and Stone, Kevin and Albert, Peter and Almahairi, Amjad and Babaei, Yasmine and Bashlykov, Nikolay and Batra, Soumya and Bhargava, Prajjwal and Bhosale, Shruti and others},
  year={2023},
  eprint={2307.09288},
  archivePrefix={arXiv},
}

@inproceedings{zhu2017prune,
  title={To prune, or not to prune: {E}xploring the efficacy of pruning for model compression}, 
  author={Michael Zhu and Suyog Gupta},
  booktitle={ICLR},
  year={2018}
}

@article{gao2021framework,
  title={A framework for few-shot language model evaluation},
  author={Gao, Leo and Tow, Jonathan and Biderman, Stella and Black, Sid and DiPofi, Anthony and Foster, Charles and Golding, Laurence and Hsu, Jeffrey and McDonell, Kyle and Muennighoff, Niklas and others},
  journal={Version v0. 0.1. Sept},
  pages={8},
  year={2021}
}

@article{brown2020language,
  title={Language models are few-shot learners},
  author={Brown, Tom and Mann, Benjamin and Ryder, Nick and Subbiah, Melanie and Kaplan, Jared D and Dhariwal, Prafulla and Neelakantan, Arvind and Shyam, Pranav and Sastry, Girish and Askell, Amanda and others},
  journal={NeurIPS},
  pages={1877--1901},
  year={2020}
}

@misc{devlin2018bert,
  title={Bert: {P}re-training of deep bidirectional transformers for language understanding},
  author={Devlin, Jacob and Chang, Ming-Wei and Lee, Kenton and Toutanova, Kristina},
  year={2018},
  eprint={1810.04805},
  archivePrefix={arXiv},
}

@inproceedings{renda2020comparing,
  title={Comparing rewinding and fine-tuning in neural network pruning},
  author={Renda, Alex and Frankle, Jonathan and Carbin, Michael},
  booktitle={ICLR},
  year={2020}
}

@inproceedings{lecun1989optimal,
  title={Optimal brain damage},
  author={LeCun, Yann and Denker, John and Solla, Sara},
  booktitle={NeurIPS},
  pages={598--605},
  year={1990}
}

@inproceedings{hassibi1993optimal,
  title={Optimal brain surgeon and general network pruning},
  author={Hassibi, Babak and Stork, David G and Wolff, Gregory J},
  booktitle={ICNN},
  pages={293--299},
  year={1993}
}

@inproceedings{han2015deep,
  title={Deep compression: {C}ompressing deep neural networks with pruning, trained quantization and huffman coding},
  author={Han, Song and Mao, Huizi and Dally, William J},
  booktitle={ICLR},
  year={2016}
}

@inproceedings{paul2022unmasking,
  title={Unmasking the lottery ticket hypothesis: {W}hat's encoded in a winning ticket's mask?},
  author={Paul, Mansheej and Chen, Feng and Larsen, Brett W and Frankle, Jonathan and Ganguli, Surya and Dziugaite, Gintare Karolina},
  booktitle={ICLR},
  year={2023}
}

@inproceedings{liu2017learning,
  title={Learning efficient convolutional networks through network slimming},
  author={Liu, Zhuang and Li, Jianguo and Shen, Zhiqiang and Huang, Gao and Yan, Shoumeng and Zhang, Changshui},
  booktitle={ICCV},
  pages={2736--2744},
  year={2017}
}

@inproceedings{molchanov2019importance,
  title={Importance estimation for neural network pruning},
  author={Molchanov, Pavlo and Mallya, Arun and Tyree, Stephen and Frosio, Iuri and Kautz, Jan},
  booktitle={CVPR},
  pages={11264--11272},
  year={2019}
}

@article{shen2022structural,
  title={Structural pruning via latency-saliency knapsack},
  author={Shen, Maying and Yin, Hongxu and Molchanov, Pavlo and Mao, Lei and Liu, Jianna and Alvarez, Jose M},
  journal={NeurIPS},
  pages={12894--12908},
  year={2022}
}

@inproceedings{nova2023gradient,
  title={Gradient-free structured pruning with unlabeled data},
  author={Nova, Azade and Dai, Hanjun and Schuurmans, Dale},
  booktitle={ICML},
  pages={26326--26341},
  year={2023}
}

@inproceedings{zhou2023three,
  title={A three-regime model of network pruning},
  author={Zhou, Yefan and Yang, Yaoqing and Chang, Arin and Mahoney, Michael W},
  booktitle={ICML},
  pages={42790--42809},
  year={2023}
}

@misc{kurtic2022gmp,
  title={GMP*: Well-Tuned Gradual Magnitude Pruning Can Outperform Most BERT-Pruning Methods},
  author={Kurtic, Eldar and Alistarh, Dan},
  eprint={2210.06384},
  archivePrefix={arXiv},
  year={2022}
}

@inproceedings{xia2023sheared,
  title={Sheared llama: {A}ccelerating language model pre-training via structured pruning},
  author={Xia, Mengzhou and Gao, Tianyu and Zeng, Zhiyuan and Chen, Danqi},
  booktitle={ICLR},
  year={2024}
}

@misc{Clark2018ThinkYH,
  title={Think you have Solved Question Answering? Try ARC, the AI2 Reasoning Challenge},
  author={Peter Clark and Isaac Cowhey and Oren Etzioni and Tushar Khot and Ashish Sabharwal and Carissa Schoenick and Oyvind Tafjord},
  year={2018},
  eprint={abs/1803.05457},
  archivePrefix={arXiv},
}

@article{sakaguchi2021winogrande,
  title={Winogrande: {A}n adversarial winograd schema challenge at scale},
  author={Sakaguchi, Keisuke and Bras, Ronan Le and Bhagavatula, Chandra and Choi, Yejin},
  journal={Communications of the ACM},
  volume={64},
  number={9},
  pages={99--106},
  year={2021},
  publisher={ACM New York, NY, USA}
}

@inproceedings{zellers2019hellaswag,
  title={Hella{S}wag: {C}an a machine really finish your sentence?},
  author={Zellers, Rowan and Holtzman, Ari and Bisk, Yonatan and Farhadi, Ali and Choi, Yejin},
  booktitle={ACL},
  pages={4791--4800},
  year={2019}
}

@inproceedings{hendrycks2020measuring,
  title={Measuring massive multitask language understanding},
  author={Hendrycks, Dan and Burns, Collin and Basart, Steven and Zou, Andy and Mazeika, Mantas and Song, Dawn and Steinhardt, Jacob},
  booktitle={ICLR},
  year={2021}
}

@inproceedings{mihaylov2018can,
  title={Can a suit of armor conduct electricity? a new dataset for open book question answering},
  author={Mihaylov, Todor and Clark, Peter and Khot, Tushar and Sabharwal, Ashish},
  booktitle={EMNLP},
  pages={2381--2391},
  year={2018}
}

@inproceedings{zhang2023dynamic,
  title={Dynamic sparse no training: {T}raining-free fine-tuning for sparse llms},
  author={Zhang, Yuxin and Zhao, Lirui and Lin, Mingbao and Sun, Yunyun and Yao, Yiwu and Han, Xingjia and Tanner, Jared and Liu, Shiwei and Ji, Rongrong},
  booktitle={ICLR},
  year={2024}
}

@inproceedings{NIPS2015_ae0eb3ee,
  title={Learning both weights and connections for efficient neural networks},
  author={Han, Song and Pool, Jeff and Tran, John and Dally, William J},
  booktitle={NeurIPS},
  pages={1135--1143},
  year={2015}
}

@inproceedings{NEURIPS2020_d1ff1ec8,
  title={Woodfisher: {E}fficient second-order approximation for neural network compression},
  author={Singh, Sidak Pal and Alistarh, Dan},
  booktitle={NeurIPS},
  pages={18098--18109},
  year={2020}
}

@inproceedings{evci2020rigging,
  title={Rigging the lottery: {M}aking all tickets winners},
  author={Evci, Utku and Gale, Trevor and Menick, Jacob and Castro, Pablo Samuel and Elsen, Erich},
  booktitle={ICML},
  pages={2943--2952},
  year={2020}
}

@inproceedings{lee2018snip,
  title={S{NIP}: {S}ingle-shot network pruning based on connection sensitivity},
  author={Lee, Namhoon and Ajanthan, Thalaiyasingam and Torr, Philip HS},
  booktitle={ICLR},
  year={2019}
}

@inproceedings{shi2023upop,
  title={Upop: {U}nified and progressive pruning for compressing vision-language transformers},
  author={Shi, Dachuan and Tao, Chaofan and Jin, Ying and Yang, Zhendong and Yuan, Chun and Wang, Jiaqi},
  booktitle={ICML},
  pages={31292--31311},
  year={2023}
}

@inproceedings{zhang2024plug,
  title={Plug-and-play: {A}n efficient post-training pruning method for large language models},
  author={Zhang, Yingtao and Bai, Haoli and Lin, Haokun and Zhao, Jialin and Hou, Lu and Cannistraci, Carlo Vittorio},
  booktitle={ICLR},
  year={2024}
}

@misc{cobbe2021training,
  title={Training verifiers to solve math word problems},
  author={Cobbe, Karl and Kosaraju, Vineet and Bavarian, Mohammad and Chen, Mark and Jun, Heewoo and Kaiser, Lukasz and Plappert, Matthias and Tworek, Jerry and Hilton, Jacob and Nakano, Reiichiro and others},
  year={2023},
  eprint={2110.14168},
  archivePrefix={arXiv},
}

@inproceedings{NEURIPS2024_0e9a05f5,
 author = {Fang, Gongfan and Yin, Hongxu and Muralidharan, Saurav and Heinrich, Greg and Pool, Jeff and Kautz, Jan and Molchanov, Pavlo and Wang, Xinchao},
 booktitle = {Neurips},
 editor = {A. Globerson and L. Mackey and D. Belgrave and A. Fan and U. Paquet and J. Tomczak and C. Zhang},
 pages = {7736--7758},
 publisher = {Curran Associates, Inc.},
 title = {MaskLLM: Learnable Semi-Structured Sparsity for Large Language Models},
 volume = {37},
 year = {2024}
}

@misc{huang2024pruninglargelanguagemodels,
      title={Pruning Large Language Models with Semi-Structural Adaptive Sparse Training}, 
      author={Weiyu Huang and Yuezhou Hu and Guohao Jian and Jun Zhu and Jianfei Chen},
      year={2024},
      eprint={2407.20584},
      archivePrefix={arXiv}
}

@inproceedings{DBLP:conf/iclr/ZhouMZLZYSL21,
  author       = {Aojun Zhou and
                  Yukun Ma and
                  Junnan Zhu and
                  Jianbo Liu and
                  Zhijie Zhang and
                  Kun Yuan and
                  Wenxiu Sun and
                  Hongsheng Li},
  title        = {Learning {N:} {M} Fine-grained Structured Sparse Neural Networks From
                  Scratch},
  booktitle    = {{ICLR} 2021,
                  Virtual Event, Austria, May 3-7, 2021},
  publisher    = {OpenReview.net},
  year         = {2021},
  timestamp    = {Thu, 12 Aug 2021 15:37:13 +0200},

}

@misc{grattafiori2024llama,
  title={The llama 3 herd of models},
  author={Grattafiori, Aaron and Dubey, Abhimanyu and Jauhri, Abhinav and Pandey, Abhinav and Kadian, Abhishek and Al-Dahle, Ahmad and Letman, Aiesha and Mathur, Akhil and Schelten, Alan and Vaughan, Alex and others},
  year={2024},
  eprint={2407.21783},
  archivePrefix={arXiv},
}

@misc{zhang2022optopenpretrainedtransformer,
      title={OPT: Open Pre-trained Transformer Language Models}, 
      author={Susan Zhang and Stephen Roller and Naman Goyal and Mikel Artetxe and Moya Chen and Shuohui Chen and Christopher Dewan and Mona Diab and Xian Li and Xi Victoria Lin and Todor Mihaylov and Myle Ott and Sam Shleifer and Kurt Shuster and Daniel Simig and Punit Singh Koura and Anjali Sridhar and Tianlu Wang and Luke Zettlemoyer},
      year={2022},
      eprint={2205.01068},
      archivePrefix={arXiv},
}

@inproceedings{Bisk2020,
    author = {Yonatan Bisk and Rowan Zellers and
            Ronan Le Bras and Jianfeng Gao
            and Yejin Choi},
    title = {PIQA: Reasoning about Physical Commonsense in
           Natural Language},
    booktitle = {Thirty-Fourth AAAI Conference on
               Artificial Intelligence},
    year = {2020},
}

@misc{lai2017racelargescalereadingcomprehension,
      title={RACE: Large-scale ReAding Comprehension Dataset From Examinations}, 
      author={Guokun Lai and Qizhe Xie and Hanxiao Liu and Yiming Yang and Eduard Hovy},
      year={2017},
      eprint={1704.04683},
      archivePrefix={arXiv},
}

@misc{kumar2024scalinglawsprecision,
      title={Scaling Laws for Precision}, 
      author={Tanishq Kumar and Zachary Ankner and Benjamin F. Spector and Blake Bordelon and Niklas Muennighoff and Mansheej Paul and Cengiz Pehlevan and Christopher Ré and Aditi Raghunathan},
      year={2024},
      eprint={2411.04330},
      archivePrefix={arXiv},
}

@misc{kaplan2020scalinglawsneurallanguage,
      title={Scaling Laws for Neural Language Models}, 
      author={Jared Kaplan and Sam McCandlish and Tom Henighan and Tom B. Brown and Benjamin Chess and Rewon Child and Scott Gray and Alec Radford and Jeffrey Wu and Dario Amodei},
      year={2020},
      eprint={2001.08361},
      archivePrefix={arXiv},
}

@inproceedings{10.5555/3600270.3602446,
author = {Hoffmann, Jordan and Borgeaud, Sebastian and Mensch, Arthur and Buchatskaya, Elena and Cai, Trevor and Rutherford, Eliza and de Las Casas, Diego and Hendricks, Lisa Anne and Welbl, Johannes and Clark, Aidan and Hennigan, Tom and Noland, Eric and Millican, Katie and van den Driessche, George and Damoc, Bogdan and Guy, Aurelia and Osindero, Simon and Simonyan, Karen and Elsen, Erich and Vinyals, Oriol and Rae, Jack W. and Sifre, Laurent},
title = {Training compute-optimal large language models},
year = {2022},
publisher = {Curran Associates Inc.},
address = {Red Hook, NY, USA},
abstract = {We investigate the optimal model size and number of tokens for training a Transformer language model under a given compute budget. We find that current large language models are significantly undertrained, a consequence of the recent focus on scaling language models whilst keeping the amount of training data constant. By training over 400 language models ranging from 70 million to over 16 billion parameters on 5 to 500 billion tokens, we find that for compute-optimal training, the model size and the number of training tokens should be scaled equally: for every doubling of model size the number of training tokens should also be doubled. We test this hypothesis by training a predicted compute-optimal model, Chinchilla, that uses the same compute budget as Gopher but with 70B parameters and 4\texttimes{} more more data. Chinchilla uniformly and significantly outperforms Gopher (280B), GPT-3 (175B), Jurassic-1 (178B), and Megatron-Turing NLG (530B) on a large range of downstream evaluation tasks. This also means that Chinchilla uses substantially less compute for fine-tuning and inference, greatly facilitating downstream usage. As a highlight, Chinchilla reaches a state-of-the-art average accuracy of 67.5\% on the MMLU benchmark, greater than a 7\% improvement over Gopher.},

articleno = {2176},
numpages = {15},
location = {New Orleans, LA, USA},
series = {NeurIPS},
year={2022}
}

@INPROCEEDINGS {10030853,
author = { Vanderschueren, Antoine and De Vleeschouwer, Christophe },
booktitle = { WACV 2023 },
title = {{ Are Straight-Through gradients and Soft-Thresholding all you need for Sparse Training? }},
year = {2023},
volume = {},
ISSN = {},
pages = {3797-3806},
abstract = { Turning the weights to zero when training a neural network helps in reducing the computational complexity at inference. To progressively increase the sparsity ratio in the network without causing sharp weight discontinuities during training, our work combines soft-thresholding and straight-through gradient estimation to update the raw, i.e. non-thresholded, version of zeroed weights. Our method, named ST-3 for straight-through/soft-thresholding/sparse-training2, obtains SoA results, both in terms of accuracy/sparsity and accuracy/FLOPS trade-offs, when progressively increasing the sparsity ratio in a single training cycle. In particular, despite its simplicity, ST-3 favorably compares to the most recent methods, adopting differentiable formulations [42] or bio-inspired neuroregeneration principles [25]. This suggests that the key ingredients for effective sparsification primarily lie in the ability to give the weights the freedom to evolve smoothly across the zero state while progressively increasing the sparsity ratio. },
keywords = {Training;Computer vision;Neural networks;Estimation;Computer architecture;Turning;Computational complexity},

month =Jan}

@misc{joshi2017triviaqalargescaledistantly,
      title={TriviaQA: A Large Scale Distantly Supervised Challenge Dataset for Reading Comprehension}, 
      author={Mandar Joshi and Eunsol Choi and Daniel S. Weld and Luke Zettlemoyer},
      year={2017},
      eprint={1705.03551},
      archivePrefix={arXiv},
}

@article{kwiatkowski-etal-2019-natural,
    title = "Natural Questions: A Benchmark for Question Answering Research",
    author = "Kwiatkowski, Tom  and
      Palomaki, Jennimaria  and
      Redfield, Olivia  and
      Collins, Michael  and
      Parikh, Ankur  and
      Alberti, Chris  and
      Epstein, Danielle  and
      Polosukhin, Illia  and
      Devlin, Jacob  and
      Lee, Kenton  and
      Toutanova, Kristina  and
      Jones, Llion  and
      Kelcey, Matthew  and
      Chang, Ming-Wei  and
      Dai, Andrew M.  and
      Uszkoreit, Jakob  and
      Le, Quoc  and
      Petrov, Slav",
    editor = "Lee, Lillian  and
      Johnson, Mark  and
      Roark, Brian  and
      Nenkova, Ani",
    journal = "Transactions of the Association for Computational Linguistics",
    volume = "7",
    year = "2019",
    address = "Cambridge, MA",
    publisher = "MIT Press",
    pages = "452--466",

}

@misc{clark2019boolqexploringsurprisingdifficulty,
      title={BoolQ: Exploring the Surprising Difficulty of Natural Yes/No Questions}, 
      author={Christopher Clark and Kenton Lee and Ming-Wei Chang and Tom Kwiatkowski and Michael Collins and Kristina Toutanova},
      year={2019},
      eprint={1905.10044},
      archivePrefix={arXiv},
}

@inproceedings{talmor-etal-2019-commonsenseqa,
    title = "{C}ommonsense{QA}: A Question Answering Challenge Targeting Commonsense Knowledge",
    author = "Talmor, Alon  and
      Herzig, Jonathan  and
      Lourie, Nicholas  and
      Berant, Jonathan",
    editor = "Burstein, Jill  and
      Doran, Christy  and
      Solorio, Thamar",
    booktitle = "NAACL",
    month = jun,
    year = "2019",


}

@inproceedings{welbl-etal-2017-crowdsourcing,
    title = "Crowdsourcing Multiple Choice Science Questions",
    author = "Welbl, Johannes  and
      Liu, Nelson F.  and
      Gardner, Matt",
    editor = "Derczynski, Leon  and
      Xu, Wei  and
      Ritter, Alan  and
      Baldwin, Tim",
    booktitle = "Proceedings of the 3rd Workshop on Noisy User-generated Text",
    month = sep,
    year = "2017",
    address = "Copenhagen, Denmark",
    publisher = "Association for Computational Linguistics",
    pages = "94--106",
    abstract = "We present a novel method for obtaining high-quality, domain-targeted multiple choice questions from crowd workers. Generating these questions can be difficult without trading away originality, relevance or diversity in the answer options. Our method addresses these problems by leveraging a large corpus of domain-specific text and a small set of existing questions. It produces model suggestions for document selection and answer distractor choice which aid the human question generation process. With this method we have assembled SciQ, a dataset of 13.7K multiple choice science exam questions. We demonstrate that the method produces in-domain questions by providing an analysis of this new dataset and by showing that humans cannot distinguish the crowdsourced questions from original questions. When using SciQ as additional training data to existing questions, we observe accuracy improvements on real science exams."
}

@misc{mishra2021acceleratingsparsedeepneural,
      title={Accelerating Sparse Deep Neural Networks}, 
      author={Asit Mishra and Jorge Albericio Latorre and Jeff Pool and Darko Stosic and Dusan Stosic and Ganesh Venkatesh and Chong Yu and Paulius Micikevicius},
      year={2021},
      eprint={2104.08378},
      archivePrefix={arXiv},
}

@inproceedings{NEURIPS2021_6e8404c3,
 author = {Pool, Jeff and Yu, Chong},
 booktitle = {Neurips},
 editor = {M. Ranzato and A. Beygelzimer and Y. Dauphin and P.S. Liang and J. Wortman Vaughan},
 pages = {13316--13327},
 publisher = {Curran Associates, Inc.},
 title = {Channel Permutations for N:M Sparsity},
 volume = {34},
 year = {2021}
}

@inproceedings{DBLP:conf/iclr/FrankleC19,
  author       = {Jonathan Frankle and
                  Michael Carbin},
  title        = {The Lottery Ticket Hypothesis: Finding Sparse, Trainable Neural Networks},
  booktitle    = {{ICLR} 2019,
                  New Orleans, LA, USA, May 6-9, 2019},
  year         = {2019},
  timestamp    = {Thu, 25 Jul 2019 13:03:15 +0200},

  bibsource    = {dblp computer science bibliography, https://dblp.org}
}

@article{10.5555/3546258.3546499,
author = {Hoefler, Torsten and Alistarh, Dan and Ben-Nun, Tal and Dryden, Nikoli and Peste, Alexandra},
title = {Sparsity in deep learning: pruning and growth for efficient inference and training in neural networks},
year = {2021},
issue_date = {January 2021},
publisher = {JMLR.org},
volume = {22},
number = {1},
issn = {1532-4435},
abstract = {The growing energy and performance costs of deep learning have driven the community to reduce the size of neural networks by selectively pruning components. Similarly to their biological counterparts, sparse networks generalize just as well, sometimes even better than, the original dense networks. Sparsity promises to reduce the memory footprint of regular networks to fit mobile devices, as well as shorten training time for ever growing networks. In this paper, we survey prior work on sparsity in deep learning and provide an extensive tutorial of sparsification for both inference and training. We describe approaches to remove and add elements of neural networks, different training strategies to achieve model sparsity, and mechanisms to exploit sparsity in practice. Our work distills ideas from more than 300 research papers and provides guidance to practitioners who wish to utilize sparsity today, as well as to researchers whose goal is to push the frontier forward. We include the necessary background on mathematical methods in sparsification, describe phenomena such as early structure adaptation, the intricate relations between sparsity and the training process, and show techniques for achieving acceleration on real hardware. We also define a metric of pruned parameter efficiency that could serve as a baseline for comparison of different sparse networks. We close by speculating on how sparsity can improve future workloads and outline major open problems in the field.},
journal = {J. Mach. Learn. Res.},
month = jan,
articleno = {241},
numpages = {124},
keywords = {sparsity, deep learning, performance, low memory, generalization}
}

@article{51791361-8fe2-38d5-959f-ae8d048b490d,
 abstract = {We propose a new method for estimation in linear models. The `lasso' minimizes the residual sum of squares subject to the sum of the absolute value of the coefficients being less than a constant. Because of the nature of this constraint it tends to produce some coefficients that are exactly 0 and hence gives interpretable models. Our simulation studies suggest that the lasso enjoys some of the favourable properties of both subset selection and ridge regression. It produces interpretable models like subset selection and exhibits the stability of ridge regression. There is also an interesting relationship with recent work in adaptive function estimation by Donoho and Johnstone. The lasso idea is quite general and can be applied in a variety of statistical models: extensions to generalized regression models and tree-based models are briefly described.},
 author = {Robert Tibshirani},
 journal = {Journal of the Royal Statistical Society. Series B (Methodological)},
 number = {1},
 pages = {267--288},
 publisher = {[Royal Statistical Society, Oxford University Press]},
 title = {Regression Shrinkage and Selection via the Lasso},
 volume = {58},
 year = {1996}
}

@inproceedings{hendrycksmath2021,
  title={Measuring Mathematical Problem Solving With the MATH Dataset},
  author={Dan Hendrycks and Collin Burns and Saurav Kadavath and Akul Arora and Steven Basart and Eric Tang and Dawn Song and Jacob Steinhardt},
    series = {NeurIPS},
    year={2022}
}

\ifCLASSOPTIONcaptionsoff
  \newpage
\fi



%



%
\begin{IEEEbiography}[{\includegraphics[width=1in,height=1.25in,clip,keepaspectratio]{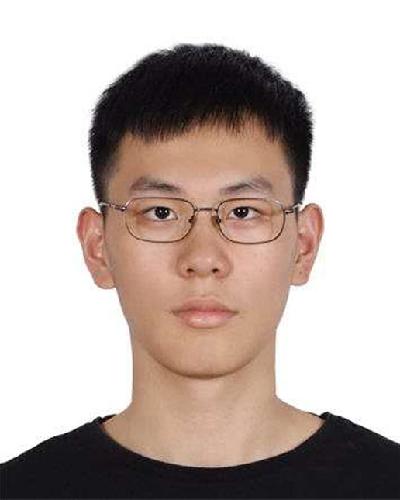}}]{Weiyu Huang}
received the BSc degree from the Department of Mathematical Sciences, Tsinghua University, China, in 2023. He is currently pursuing the PhD degree in the Department of Computer Science and Technology at Tsinghua University. His research interests include deep learning, efficient machine learning, and natural language processing.
\end{IEEEbiography}

\begin{IEEEbiography}[{\includegraphics[width=1in,height=1.25in,clip,keepaspectratio]{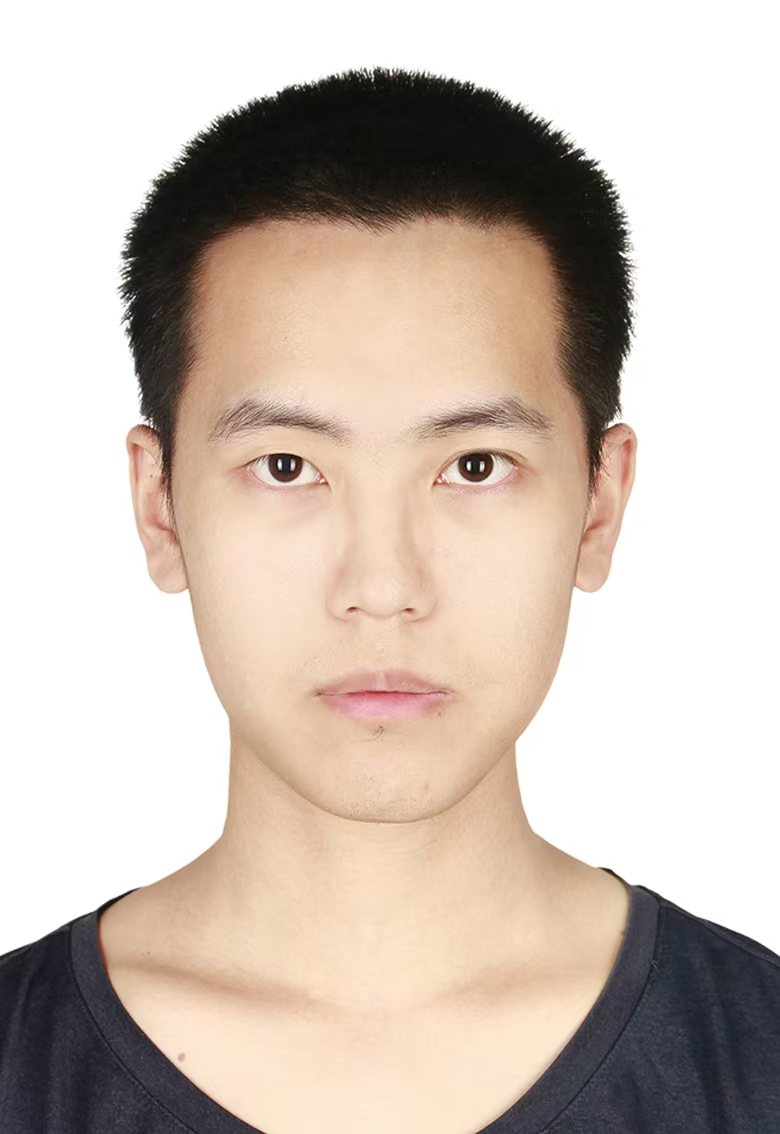}}]{Yuezhou Hu}
received the BEng degree from the Department of Computer Science and Technology, Tsinghua University, China, in 2025. His research interests include efficient algorithms for machine learning.
\end{IEEEbiography}

\begin{IEEEbiography}[{\includegraphics[width=1in,height=1.25in,clip,keepaspectratio]{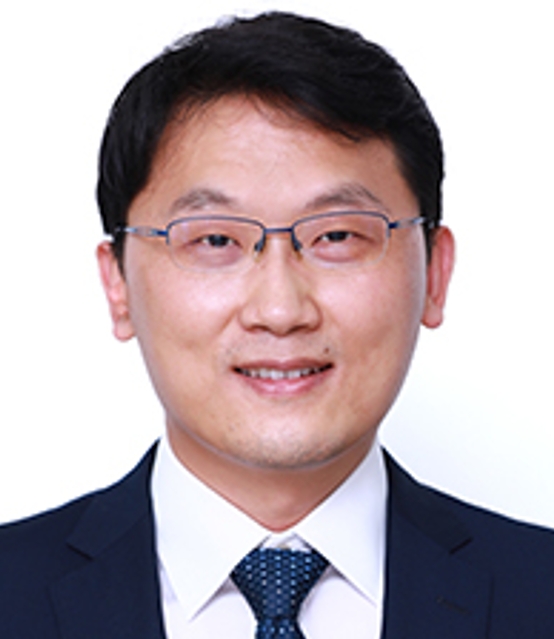}}]{Jun Zhu} (Fellow, IEEE)
received the BEng and PhD degrees from the Department of Computer Science and Technology, Tsinghua University, China, in 2005 and 2009, respectively. He is a tenured full professor in the Department of Computer Science and Technology at Tsinghua University. His research interests lie in machine learning and its applications in text and image analysis. He has published over 100 papers in prestigious conferences and journals. He serves as Associate Editor-in-Chief for IEEE Transactions on Pattern Analysis and Machine Intelligence (TPAMI). He has served as an area chair or senior program committee member for ICML, NeurIPS, IJCAI, UAI, AAAI, and AISTATS, and was the local co-chair of ICML 2014. He is a recipient of several honors, including the IEEE Intelligent Systems “AI’s 10 to Watch” Award, MIT TR35 China, the NSFC Excellent Young Scholar Award, and the CCF Young Scientist Award, and CCF first-class Natural Science Award. 
\end{IEEEbiography}

\begin{IEEEbiography}[{\includegraphics[width=1in,height=1.25in,clip,keepaspectratio]{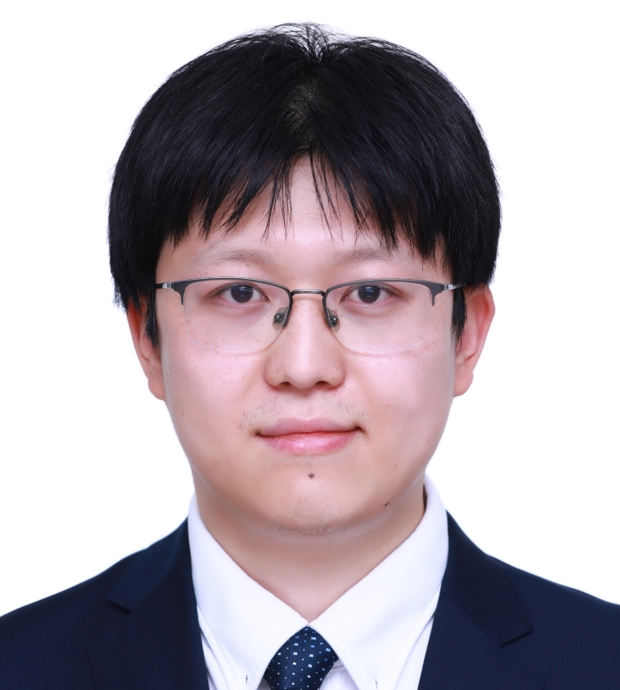}}]{Jianfei Chen} (Member, IEEE)
received the BEng and PhD degrees from the Department of Computer Science and Technology, Tsinghua University, China, in 2014 and 2019, respectively. He is currently an associate professor in the Department of Computer Science and Technology at Tsinghua University. His research interest is efficient machine learning, including low-precision training, sparse learning, mixture-of-experts. He is also interested in probabilistic inference and modeling. He served as an associate editor for IEEE TPAMI and an area chair in ICLR. (Corresponding author)
\end{IEEEbiography}








\appendices

\section{CAST Procedure} \label{append:cast}
In this section, we present the pseudocode for CAST in Algorithm \ref{alg:algorithm}. 

\begin{algorithm}[h]
\caption{Training Process for CAST}
\label{alg:algorithm}
\textbf{Input}: Total training iterations $T$; mask update frequency $T_1$; regularization strength $\lambda$; exponential decay rates $\beta_1$, $\beta_2$; weight Scaling hyperparameter $n$; Given current iteration $t$, we denote model weights as $\mathbf{\Theta}_t$; for each weight matrix $W_t^k\in \mathbf{\Theta}_t$, let $M_t^k \in \mathbf{M}_t$ denote its binary mask, $A_t^k$ be its weight scaling module;
\begin{algorithmic}[1] 
\FOR{$W^k_0 \in \mathbf{\Theta}_0$}
\STATE Get shape of $W^k_0$ as $(R_k,C_k)$;
\STATE Update weight mask $M^k_0$ according to Equation (6);
\STATE Initialize the weight scaling parameters $A_0^{k}  = \mathbf{1}^{R_k \times n}$;

\ENDFOR

\FOR{t = 1,2,..$T$}

\FOR{$W^k_{t} \in \mathbf{\Theta}_{t}$}
\IF{t$\mod$$T_1$ == 0}
\STATE Update weight mask {$M^k_{t}$} by Equation (6);
\ENDIF

\STATE Apply weight scaling through Equation (12);

\ENDFOR
\STATE Compute gradient through back-propagation on distillation loss $\mathcal{L}$ from Equation (13);

\FOR{$W^k_{t} \in \mathbf{\Theta}_t$}
\STATE Calculate $\alpha_{t} = \frac{t}{T}$;
\STATE Update parameters through AdamS in Equation (8).
\ENDFOR
\ENDFOR

\FOR{$W^k_{T} \in \mathbf{\Theta}_T$}
\STATE Conduct final pruning: $\hat{W}^k_{T} = W^k_{T} \odot M^k_{T}$;
\STATE Integrate weight scaling module into model weight through Equation (12);
\ENDFOR

\STATE \textbf{return} the sparse model.
\end{algorithmic}
\end{algorithm}

\section{Hyperparameters} \label{append:hyper}

\begin{table*}[t]
\centering
\caption{Summary of Hyperparameters and the Number of Tokens Used for Training}
\label{hyper}
\resizebox{0.9\textwidth}{!}{%
\begin{tabular}{c|ccc|cccc|ccc}
\toprule
& \multicolumn{3}{c|}{OPT} & \multicolumn{4}{c|}{GPT2}     & \multicolumn{3}{c}{LLaMA} \\ \midrule
Parameter Count& 125M    & 350M  & 1.3B   & 124M  & 350M  & 774M  & 1.5B  & 2-7B   & 2-13B  & 3-8B \\ \midrule
 
Learning Rate    & 1e-4  & 4e-5 & 2e-5 & 1e-4 & 1e-4& 4e-5& 4e-5& 2e-5& 2e-5& 2e-5 \\
Decay Coefficient  &  2e-6  & 2e-6 & 2e-6 & 1e-6 & 1e-6 & 1e-6& 1e-6& 4e-7& 4e-7& 4e-7  \\
Batch Size & 128  & 128 & 128 & 128 & 128& 128& 128&256& 256& 256  \\
Seqlen & 2048  & 2048 & 2048 & 1024 & 1024& 1024& 1024& 4096& 4096& 4096  \\
Training Steps & 30k & 30k & 30k & 60k & 60k& 60k& 60k& 7.5k& 7.5k& 7.5k  \\
Total Flipped Ratio  & 16.07\%  & 7.25\% & 5.59\% & 6.76\% & 5.95\% & 5.89\% & 6.91\% & 4.71\% & 3.40\% & 6.72\%\\

Kl Coefficient & 2/3  & 2/3 & 2/3 & 2/3  & 2/3 & 2/3 & 2/3 &1/3& 1/3 & 1/3\\
Tokens Trained & 7.5B  & 7.5B & 7.5B & 7.5B & 7.5B& 7.5B& 7.5B & 7.5B& 7.5B& 7.5B\\
\bottomrule
\end{tabular}%
}
\label{table:11}
\end{table*}

\begin{figure*}[h!]
    \centering
    \begin{subfigure}[b]{0.32\textwidth}
        \includegraphics[width=\textwidth]{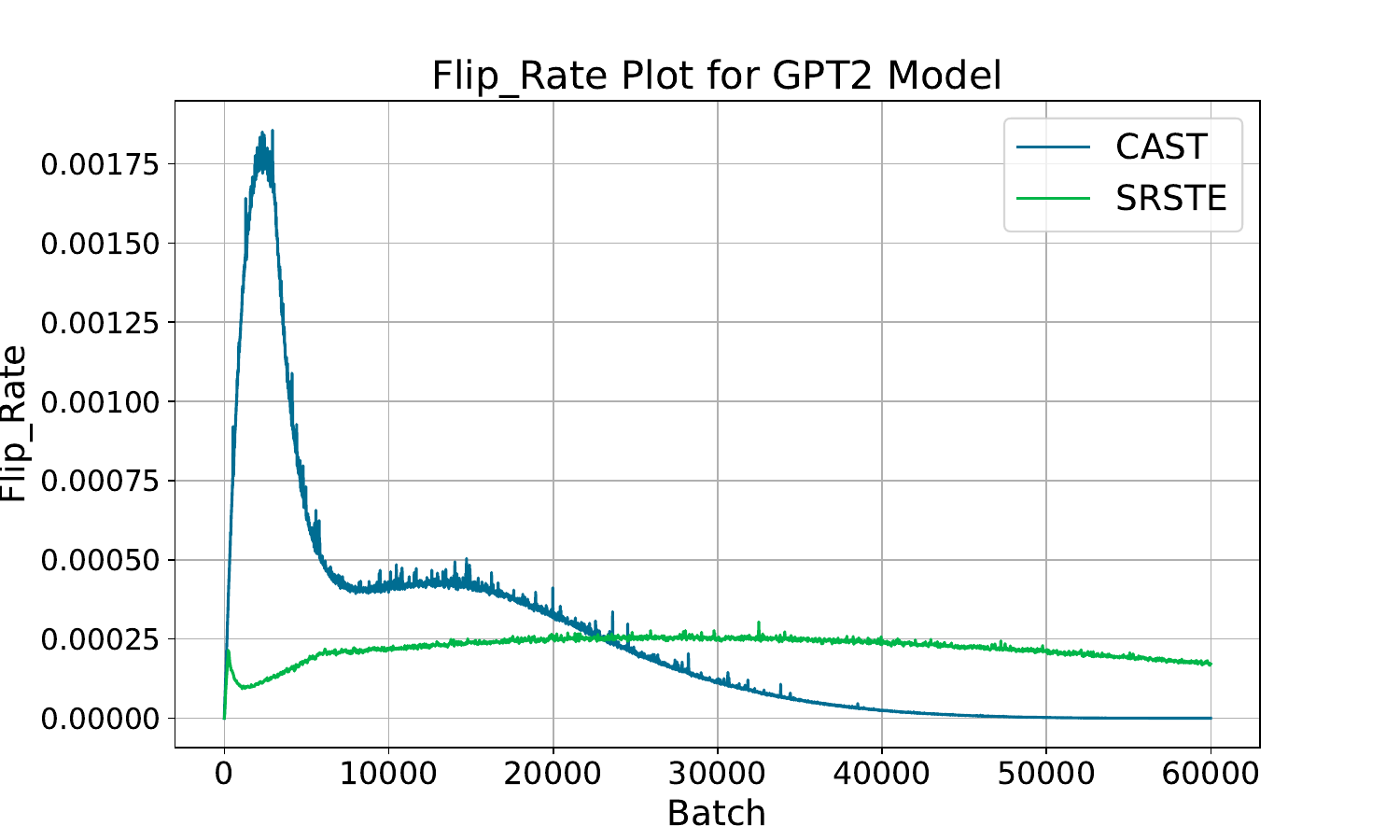}
        \caption{Flip Rate for CAST and SR-STE.}
    \end{subfigure}
        \hfill
    \begin{subfigure}[b]{0.32\textwidth}
        \includegraphics[width=\textwidth]{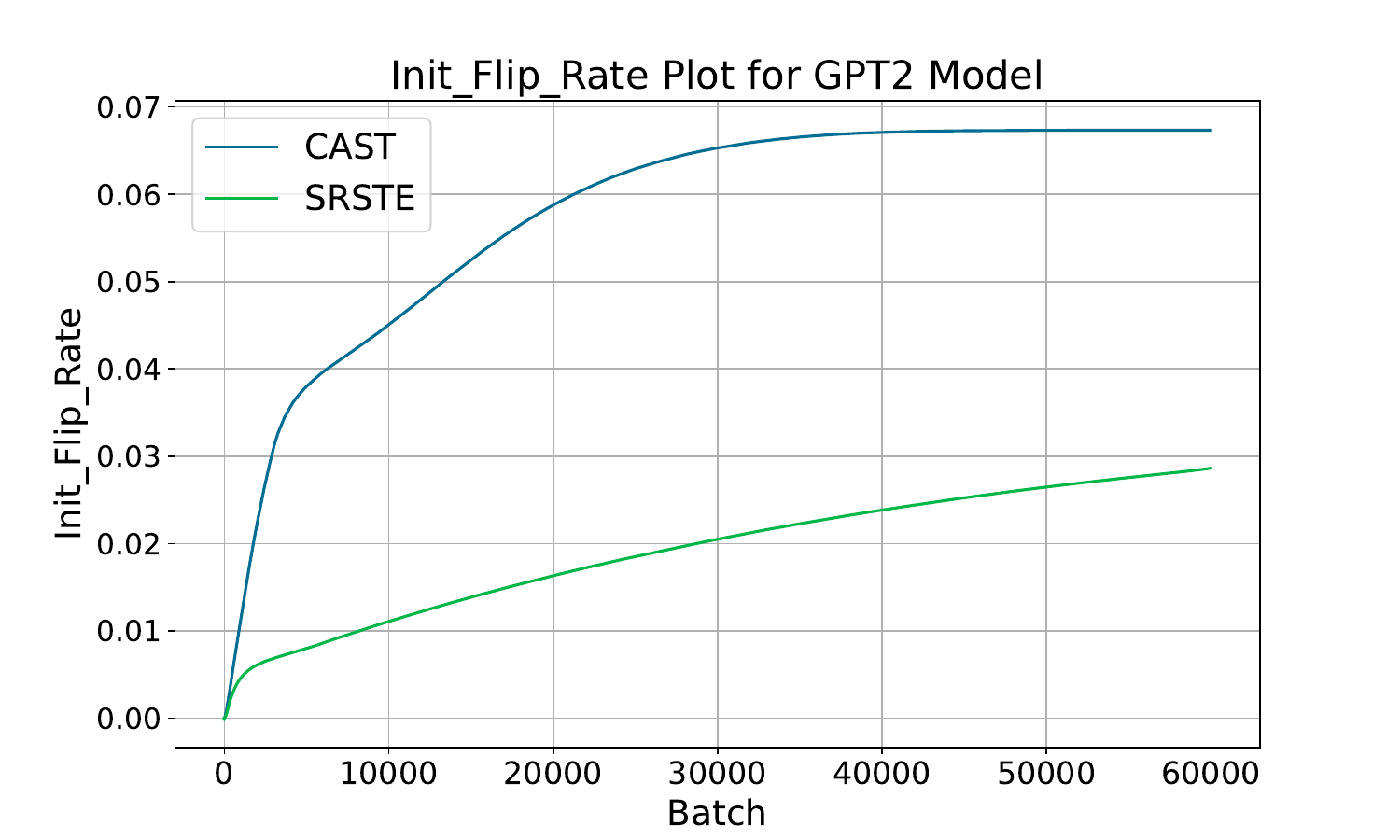}
        \caption{Init Flip Rate for CAST and SR-STE.}
    \end{subfigure}
    \hfill
    \begin{subfigure}[b]{0.32\textwidth}
        \includegraphics[width=\textwidth]{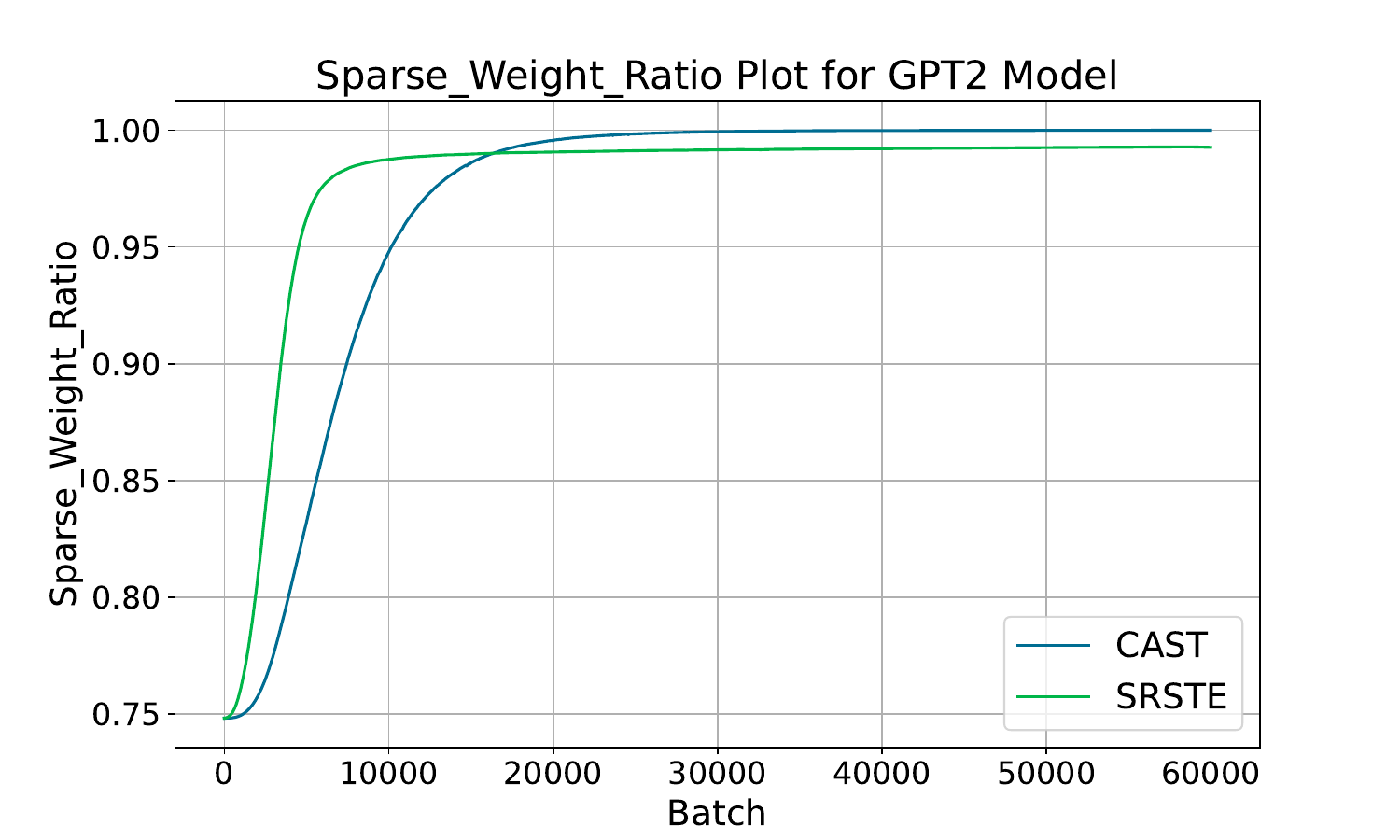}
        \caption{Spars. Wei. Rat. for CAST and SR-STE.}
    \end{subfigure}
    \caption{Flip rate, initial flip and sparse weight ratio change throughout the GPT2 model training process for CAST and SR-STE.}
    \label{fig:7}
\end{figure*}

This section summarizes the hyperparameters used during training. The decay factor is found to be robust and easy to tune, enabling the use of a consistent value across different model families. Learning rates follow the scaling guidelines from prior work \cite{kaplan2020scalinglawsneurallanguage}, with smaller values assigned to larger models. We use context lengths of 1024 for GPT-2, 2048 for OPT, and 4096 for all LLaMA models during both training and evaluation to ensure consistency across comparisons. Full hyperparameter configurations are provided in Table~\ref{table:11}.

\section{Mask Learning} \label{append:mask}

\begin{table*}[t]
  \caption{Detailed Model Performance in Scaling Experiments}
  \centering
\resizebox{0.9\textwidth}{!}{%
\begin{threeparttable}
\begin{tabular}{cccccccccccccc}
\toprule

Model  & 1B & 1.5B & 1.8B & 3B & 7.5B & 15B & 25B & 30B &  40B & $A_i$  & $B_i$ & $R^2$\\ \midrule

2-7B   & -  & 6.02 & - & 5.76 & 5.58 & 5.34 & 5.30 & 5.28  & 5.21 & 1.561 & 0.258 & 0.99\\ \midrule
2-13B   & 5.44 & - & 5.23 & 5.09 & 4.91 & 4.71 & - & - & - & 1.437 & 0.263 & 0.99\\  \midrule
3-8B    & - & 8.15 & - & 7.39 & 6.85 & - & 6.54 & 6.457 &  6.33 & 1.693 & 0.438 & 0.98\\
  \bottomrule
\end{tabular} %
\begin{tablenotes}
\small
\item \textit{Note:} For the LLaMA2-13B model experiments, due to computational constraints, we used a small number of training tokens for fitting. Cells marked with “–” indicate experiments that were not conducted due to computational constraints.
\end{tablenotes}
\end{threeparttable}
}
  \label{table:12}
\end{table*}

\begin{table*}[t]
  \caption{Finetuning Results on Downstream Tasks}
  \centering
\resizebox{0.9\textwidth}{!}{%
\begin{threeparttable}
\begin{tabular}{ccccccccccc}
\toprule  

 & &  \multicolumn{8}{c}{GLUE Task}  \\ \midrule
 
Model & \makecell{Weight \\Pattern} & \makecell{Finetune \\Pattern} &
 COLA & MNLI & MRPC & QNLI & QQP & RTE & SST-2  & Average \\ \midrule
\multicolumn{1}{c}{\multirow{3}{*}{GPT2}}  & Dense
    & Dense & 46.28 & 81.92/82.59 & 78.92/85.86&87.79& 89.48 & 64.26 & 92.09 & 77.79   \\
\multicolumn{1}{c}{\multirow{1}{*}{}} & Sparse & Dense & 39.97& 81.95/82.15& 78.92/85.95& 87.72& 89.20& 63.90& 91.51& 76.68   \\
\multicolumn{1}{c}{\multirow{1}{*}{}} & Sparse & Sparse & 37.31& 82.25/82.88& 79.17/86.27& 87.94& 89.66& 64.62& 91.97& 76.68  \\ \midrule
\multicolumn{1}{c}{\multirow{3}{*}{GPT2-Medium}}  & Dense & Dense &  51.80& 85.34/85.67& 81.13/87.10& 91.09& 90.91& 66.06& 93.69& 80.45 \\
\multicolumn{1}{c}{\multirow{1}{*}{}} & Sparse & Dense & 44.34& 85.12/85.47& 77.21/84.93& 90.17& 90.18& 62.45& 93.35& 78.12   \\
\multicolumn{1}{c}{\multirow{1}{*}{}} & Sparse & Sparse & 43.04& 85.15/85.49& 78.68/85.52& 90.37& 90.54& 64.26& 93.81& 78.49  \\ \midrule
\multicolumn{1}{c}{\multirow{3}{*}{GPT2-Large}}  & Dense    & Dense &  59.46& 86.30/86.35& 81.62/87.31& 91.82& 91.16& 67.87& 94.15& 82.17  \\
\multicolumn{1}{c}{\multirow{1}{*}{}} & Sparse & Dense &    57.58& 86.33/86.40& 79.90/86.42& 91.27& 91.25& 64.98& 94.27& 81.26\\
\multicolumn{1}{c}{\multirow{1}{*}{}} & Sparse & Sparse &  55.99& 86.21/86.34& 77.70/84.86& 91.38& 90.84& 68.59& 94.04& 81.20 \\ \midrule
\multicolumn{1}{c}{\multirow{3}{*}{GPT2-XL}}  & Dense  & Dense & 60.55& 87.00/87.21& 83.33/88.47& 92.18& 91.52& 74.37& 94.95& 83.80   \\
\multicolumn{1}{c}{\multirow{1}{*}{}} & Sparse & Dense &  58.88& 87.15/87.15& 85.05/89.57& 92.29& 91.48& 70.76& 94.61& 83.21  \\
\multicolumn{1}{c}{\multirow{1}{*}{}} & Sparse & Sparse &  56.76& 87.00/87.05& 82.35/87.80& 91.63& 91.49& 71.48& 95.18& 82.66 \\ 
  \bottomrule
\end{tabular} %
\end{threeparttable}
}
  \label{table:13}
\end{table*}

In this section, we describe several statistics that we monitored during the mask learning process. Specifically, at iteration $t$, we track the flip rate $r_t$, which measures the proportion of mask changes between consecutive steps, and the initial flip rate $i_t$, which quantifies the proportion of mask changes relative to the initial mask configuration. We also track the sparse weight ratio $S_t$, which represents the ratio of unmasked magnitude to the total magnitude. We give the definition as 
\begin{align}
     r_t = \frac{ \sum_{M^k_t \in \mathbf{M}_t}||M^k_t - M^k_{t-1}||_0}{\sum_{M^k_{t} \in \mathbf{M}_t} R_k  \cdot C_k} \nonumber
\end{align}
\begin{align}
          i_t = \frac{ \sum_{M^k_t \in \mathbf{M}_t} ||M^k_{t} - M^k_{0}||_0}{\sum_{M^k_{t} \in \mathbf{M}_t} R_k  \cdot C_k}  \nonumber
\end{align}
\begin{align}
     S_t= \frac{ \sum_{W^k_{t} \in \mathbf{\Theta}_t } ||W^k_t \odot M^k_{t}||_1}{ \sum_{W^k_{t} \in \mathbf{\Theta}_t } ||W^k_{t} ||_1}\nonumber
\end{align}
 The flip rate reflects the stability of the model's mask updates, while the initial flip rate measures the overall extent of mask learning. The sparse weight ratio quantifies the proportion of weights effectively zeroed out, indicating the potential performance impact of final pruning. We compare these metrics—flip rate, initial flip rate, and sparse weight ratio—between SR-STE and our proposed CAST. In our experiments, mask updates and flip statistics are computed every 10 batches, as more frequent updates provide no accuracy benefit and incur additional computational overhead.

As shown in Figure~\ref{fig:7}, CAST exhibits a higher initial flip rate than SR-STE, enabling more extensive exploration of sparsity patterns in early training. Toward the end of training, the mask stabilizes, promoting smoother convergence. This design allows CAST to support a higher rate of mask updates initially while maintaining overall training stability. Additionally, due to CAST's continuous weight decay mechanism, all masked weights converge to zero by the end of training, yielding a sparse weight ratio of 1. Thus, although CAST uses dense forward passes during training, the final pruning step introduces negligible performance loss. 

\section{Details on Scaling Law Experiments} \label{append:scaling}

In this section, we present the details of our scaling law experiments. To ensure effective mask learning, we maintain a consistent learning rate for each model and adjust the decay factor based on the training token budget. The data points used to fit the linear models are provided in Table~\ref{table:12}. We observe that the coefficient of determination ($R^2$) for the fitted models is approximately 99\%, indicating a strong fit and validating the suitability of the Chinchilla-style scaling approach.

To further assess the predictive capability of our model, we conduct a leave-one-out analysis by excluding the final data point and using the remaining data to estimate its value. For LLaMA2-7B and LLaMA3-8B, the predicted perplexities are 5.23 and 6.34, with absolute errors of just 0.02 and 0.01, respectively. In the case of LLaMA2-13B, the prediction error is higher, likely due to a smaller number of available data points for fitting. Interestingly the value of $B_i$ for both LLaMA2 model are similar providing more evidence that the sparse retraining scaling law conforms with Chinchilla's claim. Overall, these results demonstrate the robustness and effectiveness of our scaling model in estimating performance under varying training budgets.

\section{Detailed Results on GLUE Finetuning Tasks} \label{append:finetune}

We present the detailed results of GLUE fine-tuning in Table~\ref{table:13}. For sparse models, we evaluate two settings: (1) standard fine-tuning, where all weights—including previously masked ones—are updated, effectively converting the model back to a dense form; and (2) sparse-aware fine-tuning, where masked weights remain fixed at zero, and only unmasked weights are updated. Notably, we did not observe any performance improvement from prior methods that introduce additional parameters during fine-tuning, despite their increased complexity.

\section{Time Complexity Comparison} \label{append:time}

In this section, we analyze the time complexity of CAST. We measure the training time per mini-batch on an 8×H800 GPU setup for GPT and OPT models and a 32×H800 GPU setup for LLaMA3-8B. Theoretically, knowledge distillation methods should require approximately $\frac{4}{3}$ times the FLOPs compared to non-distillation methods, translating into proportionally increased training time under purely compute-bound conditions. However, our empirical results (see Table~\ref{table:14}) show lower than theoretical overhead for smaller models, indicating compute is not the bottleneck. LLaMA2-7B experiences greater overhead than predicted by theory, suggesting it is predominantly compute-bound. Moreover, MaskLLM was trained for 1200 A100 GPU hours on LLaMA2-7B, whereas our method and other baselines are trained using roughly 600 H800 GPU hours.
\begin{table}[t]
\centering
\caption{Comparison for Time Complexity}
\resizebox{0.47\textwidth}{!}{%
\begin{tabular}{ccccc}
\toprule   

 Model  & \makecell{Naive \\ Retraining} & SR-STE & \makecell{CAST w/o \\ KD} & CAST \\ \midrule
GPT2     & 193.56ms & 200.37ms & 202.32ms & 278.24ms    \\
OPT-125M &  195.30ms & 203.80ms & 205.20ms & 284.99ms \\ 
LLaMA3-8B  &  261s & 276s & 281s & 403s \\ 
  \bottomrule
\end{tabular} %
}

 \label{table:14}
\end{table}

\section{SR-STE Gradient Analysis} \label{append:srste}
SR-STE maintains a sparse forward process and updates masked weights using the STE approximation. Furthermore, it applies $L_2$ decay to mitigate mask oscillation. Specifically, for a parameter $\theta_t$ at the $t^\text{th}$ iteration, the weight is first multiplied by a binary mask, resulting in $\hat{\theta}_t = \theta_t \cdot m_t$, which is then used in the forward pass. During the backward pass, let $g(\cdot) = \frac{\partial \mathcal{L}}{\partial \theta}(\cdot)$ denote the gradient of the loss with respect to $\theta$. Under this notation, the approximated element-wise update rule for the parameter is given by:

\begin{equation}
\theta_{t+1} =
\theta_{t} - \gamma_t \left(g(\hat\theta_{t}) + \lambda \theta_{t} \right) \nonumber
\end{equation}

where $m_t$ and $\gamma_t$  are values of weight mask and learning rate at iteration $t$ respectively. To illustrate the effect of inaccurate gradient estimator, we calculate the  gradient error of this approximation when updating masked weights, as the difference between the estimated and true gradients can be expressed as:

\begin{align}  
   \Delta_t &= \left| g(\theta_t) - g(\hat{\theta}_t) - \lambda \theta_t \right| \notag \\
       &= \left| g'(0) (\theta_t + o(\theta_t)) - \lambda_t \theta_t \right| \notag \\
          &\approx \left| g'(0) - \lambda \right| \cdot \left| \theta_t \right| \propto \left| \theta_t \right| \nonumber
\end{align}

As shown above, under certain approximations, gradient estimation becomes increasingly inaccurate for weights with larger magnitudes.

\section{Distillation Function Ablation}
In this section, we provide the results of ablation studies 
on different distillation functions which use attention and hidden states for distillation, including TinyBERT , MobileBERT, Sparse-Finetuning, as well as MiniLLM, which employs reverse KL divergence. As shown in \ref{table:15}, we find that using intermediate information during the distillation of generative language models is detrimental, and that KL loss is sufficient for optimal performance. 

\begin{table}[t]
\centering
\caption{Ablation study on different distillation loss for training sparse models.}
\resizebox{0.45\textwidth}{!}{%
\begin{tabular}{ccccc}
\toprule
 & \multicolumn{3}{c}{GPT} & OPT \\ \cmidrule(lr){2-5} 
 Method    & 124M & 350M& 774M& 125M \\ \midrule
Dense   & 29.95 & 21.72   &19.43  &27.76\\  \midrule
TinyBERT         & 40.38  & 32.20  & 28.62 & 37.14\\ 
MobileBERT       & 42.60 &   30.92  &  28.08 & 38.46  \\ 
Sparse Fintuning & 39.57&   28.73   & 26.54& 36.62  \\ 
MiniLLM          & 31.27 &   23.14   & 19.96 & 28.35   \\ 
KL Loss(Ours)  & 31.24 &   23.12 & 19.97 & 28.32\\
\bottomrule
\end{tabular}
}
\label{table:15}
\end{table}

\end{document}